\documentclass{article}

\usepackage{wrapfig}

 \usepackage[preprint]{neurips_2026}

\usepackage[utf8]{inputenc} % allow utf-8 input
\usepackage[T1]{fontenc}    % use 8-bit T1 fonts
\usepackage{hyperref}       % hyperlinks
\usepackage{url}            % simple URL typesetting
\usepackage{booktabs}       % professional-quality tables
\usepackage{amsfonts}       % blackboard math symbols
\usepackage{nicefrac}       % compact symbols for 1/2, etc.
\usepackage{microtype}      % microtypography
\usepackage{xcolor}         % colors

\usepackage{makecell}
\usepackage[table]{xcolor}
\usepackage{array}
\usepackage{arydshln}

\usepackage{microtype}
\usepackage{graphicx}
\usepackage{subcaption}
\usepackage{booktabs}
\usepackage{hyperref}

\usepackage{amsmath}
\usepackage{amssymb}
\usepackage{mathtools}
\usepackage{amsthm}
\usepackage{algorithm}
\usepackage{algorithmic}
\usepackage[capitalize,noabbrev]{cleveref}
\usepackage[textsize=tiny]{todonotes}
\usepackage{multirow}
\usepackage{neurips_2026} 

\theoremstyle{plain}
\newtheorem{theorem}{Theorem}[section]
\newtheorem{lemma}[theorem]{Lemma}
\newtheorem{proposition}[theorem]{Proposition}

\theoremstyle{definition}
\newtheorem{definition}[theorem]{Definition}

\title{FragileFlow: Spectral Control of Correct-but-Fragile Predictions for Foundation Model Robustness}

\author{%
  Zhuoyun Li, Boxuan Wang, Jinwei Hu, Xiaowei Huang, Yi Dong \\
  School of Computer Science and Informatics, University of Liverpool, UK
}

\begin{document}

\maketitle

\begin{abstract}
Robust adaptation of LLMs and VLMs is often evaluated by average accuracy or average consistency under perturbations.
However, these averages can hide a structured failure mode: a prediction may remain correct while probability mass already flows from particular true classes toward systematic wrong competitors near the decision boundary.
In this paper, we formalize this phenomenon as margin-aware error flow and introduce \emph{FragileFlow}, a plug-in regularizer that uses a calibrated margin buffer to identify correct-but-fragile predictions and organize their off-class probability mass into a class-wise vulnerable-risk matrix.
Theoretically, we provide the first PAC-Bayes upper bound for this margin-aware error-flow object, showing how empirical spectral control yields a conservative route to deterministic worst-class robustness under a stability condition.
Experiments on multiple-choice LLM benchmarks and few-shot CLIP adaptation show that FragileFlow consistently improves the proposed theory-facing risk measures over matched baselines, yields perturbed worst-class accuracy gains in most settings, and preserves clean accuracy across comparisons.
\end{abstract}

% 有地方的话加在Conclusion之前discussion
% Conclusion 我们提出了什么方法 解决了什么问题 在什么实验被验证效果如何 future work

\section{Introduction}
\label{sec:introduction}

Foundation models have evolved from simple text generators into central components of high-stakes decision-making~\cite{Bommasani2021FoundationModels,NEURIPS2020_1457c0d6,radford2021learning,Hu_Dong_Sun_Huang_2026}. In these practical applications, robustness is an important requirement, as a minor visual distraction or a subtle linguistic perturbation can easily alter a critical recommendation~\cite{BENCHMARKS2021_335f5352,10.1145/3689217.3690621,mao2023understanding,hu2026lying,pmlr-v235-schlarmann24a}. Consequently, a reliable model must not only perform well on clean inputs but also remain stable under various perturbations.

% While existing robust adaptation methods, such as adversarial training and regularized fine-tuning, have made significant strides, the gaps stills remain. First, they predominantly optimize for average robustness, measuring whether overall accuracy or loss improves while largely ignoring the underlying mechanics of how a prediction degrades. This misses a fragile regime in which the predicted option remains correct and looks good on test set, yet perturbations already push substantial probability toward systematic wrong competitors. Even when the true option retains the highest score, perturbations can silently push substantial probability mass toward systematic wrong competitors. Second, many practical adaptation techniques are justified primarily by empirical gains. They lack rigorous theoretical generalization bounds tied to the specific fragile behaviors they aim to suppress. As a result, a model may appear stable on standard benchmarks while hiding structured vulnerabilities, leaving the worst-class risk unchecked.
Existing robust adaptation methods have made important progress toward this goal. 
Adversarial training exposes models to difficult perturbations, while regularized fine-tuning methods encourage local smoothness or consistency between clean and perturbed inputs~\cite{DBLP:journals/corr/GoodfellowSS14,aghajanyan2021better,madry2018towards,zhang2019theoretically,jiang2020smart,NEURIPS2020_1ef91c21}.
In LLMs, this is often implemented through adversarial, KL-based, noise-based, or trust-region regularization~\cite{Zhu2020FreeLB:,jiang2020smart,aghajanyan2021better}; in VLMs, robustness is commonly improved by perturbing visual inputs, tuning prompts, or adapting lightweight components~\cite{mao2023understanding,10.1007/978-3-031-72995-9_4,pmlr-v235-schlarmann24a,11095021}. 
However, gaps still remain. 
First, most objectives still optimize an average notion of robustness: they ask whether accuracy, loss, or consistency improves on average, but they do not track how probability mass moves among wrong options before the final prediction fails~\cite{xu2021robust,pmlr-v148-benz21a,tian2021analysis,10.1609/aaai.v37i12.26749}. 
This misses a fragile regime in which the predicted option remains correct, yet perturbations already push substantial probability toward systematic wrong competitors. 
Second, many practical robust adaptation methods are supported mainly by empirical improvements, while their objectives are not tied to a generalization bound for the specific fragile behavior they aim to reduce~\cite{neyshabur2018pac,lotfi2022pacbayes,NEURIPS2025_9408564a,jin2025enhancing}. 
As a result, a model may look stable under standard aggregate metrics while hiding structured vulnerabilities.

To address this empirical and theoretical gap, 
% we use finite-option LLM and VLM tasks as a controlled interface. Because the predictive distribution over finite labels is directly observable, it allows us to precisely track this vulnerable probability flow without the ambiguity of open-ended generation. Rather than merely asking whether a final answer flips, we track exactly where the confidence leaks. Based on this insight, we introduce FragileFlow, a lightweight, plug-in regularizer. FragileFlow constructs a margin-aware error-flow matrix under perturbation and actively suppresses its dominant spectral mode. The margin buffer specifically targets examples that are already wrong or precariously close to the decision boundary, while the spectral penalty disrupts coherent probability flow toward recurring distractors.
we study this correct-but-fragile regime through finite-option LLM and VLM tasks. 
This setting gives a controlled interface: the model's distribution over candidate options is directly observable, and each wrong option has a clear semantic meaning. 
Instead of asking only whether the final answer flips, we ask where the off-class probability goes, which true options are most vulnerable, and whether the resulting error pattern is scattered or structured. 
Based on this observation, we introduce \textbf{FragileFlow}, a lightweight plug-in regularizer for robust adaptation. 
FragileFlow constructs a margin-aware error-flow matrix under perturbation. 
The margin buffer focuses on examples that are already misclassified or still correct but close to the decision boundary, while the spectral penalty suppresses coherent probability flow from true options toward recurring wrong competitors. 
In this way, FragileFlow targets a failure pattern that average robustness objectives can easily miss.

Crucially, we provide the theoretical foundation that existing empirical methods lack. While PAC-Bayes analysis has been used for standard generalization and begun to extend to modern neural networks and language models~\cite{jin2025enhancing,nagarajan2018deterministic}, we establish, to our knowledge, the first PAC-Bayes spectral control framework tailored for margin-aware error flow. Our bound rigorously connects the empirical spectral error-flow term to the population's vulnerable worst-class risk. Under a stated logit-stability condition, this demonstrates that FragileFlow is not merely a heuristic add-on, but is mathematically aligned with deterministic worst-class robustness at test time, directly bridging the gap between empirical adaptation and provable generalization. Our main contributions are threefold:

\begin{itemize}
    \item We formalize the ``correct-but-fragile'' failure mode in robust adaptation, showing how models can silently leak probability mass toward systematic wrong competitors near the decision boundary before the final prediction fails.

    \item We introduce \emph{FragileFlow}, a margin-aware spectral plug-in regularizer that suppresses structured vulnerable probability flow and can be attached to existing adaptation objectives.

    \item We bridge the theoretical gap in robust adaptation by establishing the first PAC-Bayes spectral control route for this error-flow object. We further show its connection to deterministic worst-class risk and validate its effectiveness across both LLM and VLM settings.
\end{itemize}

\section{Methodology}
\label{sec:methodology}
\label{sec:prelim}

% To ..., we need to address five research questions:
% \textbf{RQ1:} What prediction interface allows us to observe option-wise probability movement?
% \textbf{RQ2:} Which risk object captures correct-but-fragile behavior under perturbation?
% \textbf{RQ3:} Can we obtain theoretical guarantees rather than relying solely on empirical performance?
% \textbf{RQ4:} How does this posterior risk relate to the deterministic model used at test time?
% \textbf{RQ5:} How can this theory be implemented during training?
% We answer each of these questions in the following sections.
% We now develop FragileFlow in the order shown in Fig.~\ref{fig:workflow}. 
% The method starts from a finite-option prediction interface, builds a margin-aware error-flow matrix under perturbation, derives a PAC-Bayes spectral control route for this risk object, connects the posterior risk to the deterministic model used at test time, and finally turns the empirical spectral term into a plug-in regularizer for robust adaptation.

\subsection{Prediction and Perturbation Formulation}
\label{sec:problem_formulation}
\label{sec:token_cls}
\label{sec:robust_setup}
We now develop FragileFlow in the order shown in Fig.~\ref{fig:workflow}. 
First, we set up the notation for finite-option prediction and perturbations.
Each input is assigned one label from a fixed
set of $K$ candidate options. We first define the prediction interface and the
perturbation notation for a fixed adapted model $\theta$. Randomized adaptation
parameters are introduced later in Section~\ref{sec:pac_bayes_control}.

\begin{definition}[Finite-option prediction]
Let $\mathcal{D}$ be a distribution over input--label pairs $(x,y)$ with
$y\in\{1,\ldots,K\}:=[K]$. Let
$S=\{(x_r,y_r)\}_{r=1}^{m}$ be a finite sample drawn from $\mathcal{D}$.
For a model $\theta$, each input $x$ induces a score vector
$s_\theta(x)\in\mathbb{R}^{K}$, where $s_\theta(x,k)$ is the score assigned to
option $k$. The induced option distribution is
% \begin{equation}
$p_\theta(k\mid x)
:=
\frac{\exp(s_\theta(x,k))}
{\sum_{k'=1}^{K}\exp(s_\theta(x,k'))}$.
% \label{eq:score_prob_def}
% \end{equation}
The deterministic predictor used for evaluation is
% \begin{equation}
$\hat y^{\mathrm{det}}_\theta(x)
:=
\arg\max_{k\in[K]}s_\theta(x,k)$.
% \label{eq:det_predictor}
% \end{equation}
\end{definition}

\begin{definition}[Option margin]
For an example $(x,y)$, the option margin is
% \begin{equation}
$\Delta_\theta(x,y)
:=
s_\theta(x,y)-\max_{k\neq y}s_\theta(x,k)$.
% \label{eq:option_margin}
% \end{equation}
A positive margin means that the correct option is selected by the deterministic
predictor. 
A negative margin means the predictor selects a wrong option.
A small positive margin means that the prediction is still correct,
but a wrong option is close to overtaking the true one.
\end{definition}

\begin{definition}[Perturbed distribution and sample]
For each input $x$, let $\mathcal{U}(x)$ be the allowed perturbation set. A
perturbation rule $\Pi(\cdot\mid x,y)$ is supported on $\mathcal{U}(x)$ and may
represent either random or adversarially selected perturbations. The perturbed
distribution $\mathcal{D}'$ is induced by drawing $(x,y)\sim\mathcal{D}$ and then
drawing $x'\sim\Pi(\cdot\mid x,y)$. Given the finite sample $S$, the corresponding
perturbed sample is
% \begin{equation}
$S'
:=
\{(x'_r,y_r)\}_{r=1}^{m},
\quad
x'_r\sim\Pi(\cdot\mid x_r,y_r)$.
% \label{eq:perturbed_sample}
% \end{equation}
\end{definition}

\begin{figure}
    \centering
    \includegraphics[width=1\linewidth]{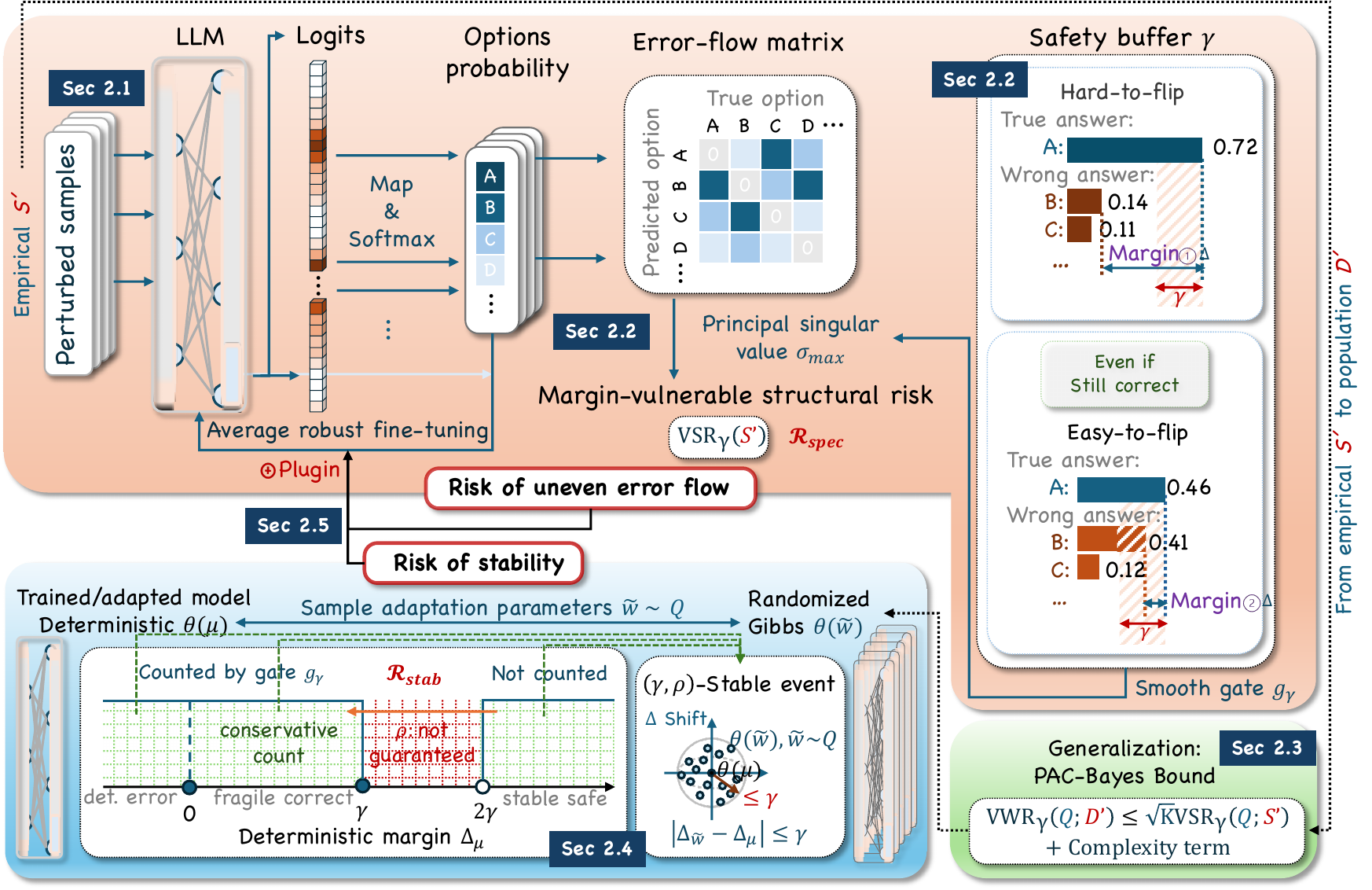}
\caption{\textbf{Overview of FragileFlow.}
The figure summarizes the pipeline from finite-option prediction to margin-aware error flow, PAC-Bayes control, deterministic stability, and the final plug-in objective. Section tags indicate where each component is defined.}
\label{fig:workflow}
    \label{fig:mianflowchart}
\end{figure}

\subsection{Margin-aware Error Flow}
\label{sec:error_flow}

Perturbations can change more than the final predicted option. They can also
change where the model places probability among the wrong options. We first
record this off-option allocation with an error-flow matrix.

\begin{definition}[Ungated error-flow matrix]
For a fixed model $\theta$, the ungated population error-flow matrix under
$\mathcal{D}'$ is
\begin{equation}
(M_{\theta}^{\mathcal{D}'})_{ij}
:=
\mathbb{E}_{(x',y)\sim\mathcal{D}'}
\left[p_\theta(i\mid x')\mid y=j\right],
\quad i\neq j,
\qquad
(M_{\theta}^{\mathcal{D}'})_{jj}:=0 .
\label{eq:ungated_flow_matrix}
\end{equation}
Rows index the receiving option $i$, and columns index the ground-truth option
$j$. The zero diagonal removes probability mass assigned to the correct option.
% For a perturbed sample $S'$, with $m_j:=|\{r:y_r=j\}|$, the empirical counterpart is
% \begin{equation}
% (M_{\theta}^{S'})_{ij}
% :=
% \frac{1}{m_j}
% \sum_{r:y_r=j}p_\theta(i\mid x'_r),
% \quad i\neq j,
% \qquad
% (M_{\theta}^{S'})_{jj}:=0 .
% \label{eq:empirical_ungated_flow_matrix}
% \end{equation}
This matrix tells us where wrong-option probability goes, but not how dangerous
that probability is. A wrong option receiving extra probability is less concerning
when the true option is far ahead. It is more concerning when the true option is
only slightly ahead, because a small additional shift can flip the prediction.
\end{definition}

\begin{definition}[Margin gate]
Given a safety buffer $\gamma\geq 0$ and a temperature $\kappa>0$, the smooth
margin gate is
\begin{equation}
g_{\gamma,\kappa}^{\theta}(x',j)
:=
\sigma\left(\frac{\gamma-\Delta_\theta(x',j)}{\kappa}\right)
\in(0,1),
\label{eq:gate_sigmoid_prelim}
\end{equation}
where $\sigma(\cdot)$ is the logistic sigmoid. This gate is a differentiable
version of $\mathbf{1}\{\Delta_\theta(x',j)\leq\gamma\}$. It is large when the
perturbed example is already misclassified or lies inside the safety buffer, and
small when the true option has a clear margin.
Intuitively, $\gamma$ draws a band around the decision boundary. Examples inside
this band are not all wrong, but they are easy to flip. 
We therefore weight the off-option probability in Eq.~\ref{eq:ungated_flow_matrix}
by this margin gate.
\end{definition}

\begin{definition}[Margin-aware error-flow matrix]
The margin-aware population error-flow matrix is
\begin{equation}
(M_{\theta}^{\mathcal{D}',\gamma})_{ij}
:=
\mathbb{E}_{(x',y)\sim\mathcal{D}'}
\left[
g_{\gamma,\kappa}^{\theta}(x',j)
p_\theta(i\mid x')
\mid y=j
\right],
\quad i\neq j,
\qquad
(M_{\theta}^{\mathcal{D}',\gamma})_{jj}:=0 .
\label{eq:M_margin_population}
\end{equation}

% For the sample $S'$, with $m_j:=|\{r:y_r=j\}|$, its empirical form is
% $(M_{\theta}^{S',\gamma})_{ij}:=
% m_j^{-1}\sum_{r:y_r=j}
% g_{\gamma,\kappa}^{\theta}(x'_r,j)p_\theta(i\mid x'_r)$ for $i\neq j$,
% and $(M_{\theta}^{S',\gamma})_{jj}:=0$.
Larger values of $\gamma$ include a wider safety band, so the matrix becomes more
conservative. We quantify the risk of the phenomenon:
% : it counts not only current mistakes, but also correct predictions
% that are close to becoming mistakes.
\end{definition}

\begin{definition}[Vulnerable risks]
For a fixed model $\theta$, the vulnerable worst-option risk is
% \begin{equation}
$\mathrm{VWR}_{\gamma}(\theta;\mathcal{D}')
:=
\|M_{\theta}^{\mathcal{D}',\gamma}\|_1$.
% \label{eq:vwr_def}
% \end{equation}
The vulnerable spectral risk is
% \begin{equation}
$\mathrm{VSR}_{\gamma}(\theta;\mathcal{D}')
:=
\|M_{\theta}^{\mathcal{D}',\gamma}\|_2$.
% \label{eq:vsr_def}
% \end{equation}
\end{definition}
Here $\|\cdot\|_1$ denotes the induced matrix $1$-norm, i.e., the maximum column
sum, and $\|\cdot\|_2$ denotes the spectral norm.
The first quantity asks which ground-truth option leaks the most vulnerable
probability mass to wrong options. The second asks whether this leakage is
structured rather than scattered. For example, several true options may drift
toward the same wrong option, or a group of options may become mutually
confusable. The standard norm conversion gives
$\mathrm{VWR}_{\gamma}(\theta;\mathcal{D}')
\leq
\sqrt{K}\,\mathrm{VSR}_{\gamma}(\theta;\mathcal{D}')$
(see Appendix~\ref{app:proof_l1_spec}). Thus, controlling the spectral risk gives
a conservative way to reduce the worst-option vulnerable readout while also
penalizing coherent error-flow patterns.

\subsection{PAC-Bayes Spectral Control with Randomized Adaptation}
\label{sec:gibbs_pacbayes}
\label{sec:posterior_coords}
\label{sec:matrix_risk}
\label{sec:pacbayes_bound}
\label{sec:main_theorem}
\label{sec:pac_bayes_control}

The previous definitions describe the vulnerable error-flow object on the
perturbed population $\mathcal{D}'$. In practice, we only observe a
finite perturbed sample $S'$. For generalization, we need to relate the empirical matrix on
$S'$ to the population risk on $\mathcal{D}'$. We use PAC-Bayes analysis because
it gives a direct way to separate the observed empirical term from the complexity
of the adapted model.

\begin{definition}[Randomized adaptation]
\label{def:randomized_adaptation}
Let $w\in\mathbb{R}^{d_{\mathrm{train}}}$ denote the trainable adaptation
coordinates, and write $\theta(w):=\mathcal{T}(\theta_0,w)$, where $\theta_0$ is
fixed and $\mathcal{T}$ specifies how $w$ is inserted into the predictor. Let
$P$ be a prior over $w$ chosen before observing the sample, and let $Q$ be a
posterior after observing the sample. We write
$\mu:=\mathbb{E}_{\widetilde w\sim Q}[\widetilde w]$, and the deterministic
adapted model used for evaluation is $\theta(\mu)$.
\end{definition}

For the analysis, a sampled coordinate $\widetilde w\sim Q$ induces a model
$\theta(\widetilde w)$. We use the corresponding option distribution
$p_{\theta(\widetilde w)}(\cdot\mid x)$ to define a Gibbs predictor. This
randomized predictor is only an analysis device; the empirical results still use
the deterministic model $\theta(\mu)$. The connection between these two
predictors is handled in Section~\ref{sec:gibbs_to_det}.

\begin{definition}[Posterior-averaged vulnerable risks]
We fix the perturbation protocol before the PAC-Bayes analysis, so
$\mathcal{D}'$ and $S'$ are the population and empirical perturbed objects. The
remaining randomness comes from $\widetilde w\sim Q$. The posterior-averaged
margin-aware matrices are
$\bar M_{\mathcal{D}',\gamma}^{Q}
:=
\mathbb{E}_{\widetilde w\sim Q}
\left[M_{\theta(\widetilde w)}^{\mathcal{D}',\gamma}\right] \text{and }
\bar M_{S',\gamma}^{Q}
:=
\mathbb{E}_{\widetilde w\sim Q}
\left[M_{\theta(\widetilde w)}^{S',\gamma}\right]$.
The posterior vulnerable worst-option risk and spectral risk are
$\mathrm{VWR}_{\gamma}(Q;\mathcal{D}')
:=\|\bar M_{\mathcal{D}',\gamma}^{Q}\|_1$ and
$\mathrm{VSR}_{\gamma}(Q;\mathcal{D}')
:=\|\bar M_{\mathcal{D}',\gamma}^{Q}\|_2$, respectively.
We also write
$\mathrm{VSR}_{\gamma}(Q;S')
:=\|\bar M_{S',\gamma}^{Q}\|_2$
for the empirical posterior spectral risk.
% By the same norm conversion,
% $\mathrm{VWR}_{\gamma}(Q;\mathcal{D}')
% \leq
% \sqrt{K}\,\mathrm{VSR}_{\gamma}(Q;\mathcal{D}')$.
\end{definition}

The following theorem gives the empirical-to-population step. It shows that a
small empirical spectral error-flow signal on $S'$ controls the population
vulnerable worst-option risk, up to an adaptation-complexity term.

\begin{theorem}[PAC-Bayes control of margin-aware vulnerable risk]
\label{thm:pac_bayes_control}
\label{thm:pacbayes_spectral}
Fix $\gamma\geq 0$ and $\delta\in(0,1)$. Let $m_j$ be the number of option-$j$
samples used in the option-conditional empirical matrix, and let
$m_{\min}:=\min_{j\in[K]}m_j\geq 1$. Condition on the realized option counts
$\{m_j\}_{j=1}^{K}$ and on the perturbation protocol that induces
$\mathcal{D}'$ and $S'$. Then, with probability at least $1-\delta$ over the
draw of $S$ and the induced perturbed sample $S'$, for all posteriors $Q$
simultaneously,
\begin{equation}
\label{eq:pac}
    \mathrm{VWR}_{\gamma}(Q;\mathcal{D}')
\leq
\sqrt{K}\,\mathrm{VSR}_{\gamma}(Q;S')
+
2\sqrt{
\frac{
2K\left(
\mathrm{KL}(Q\|P)+2K\ln 9+\ln\frac{2}{\delta}
\right)}
{m_{\min}}
}.
\end{equation}
\end{theorem}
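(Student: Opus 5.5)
We reduce the statement to a uniform deviation bound on the posterior-averaged matrix in spectral norm, and then convert norms. The first step is the norm conversion already recorded in the paper: $\mathrm{VWR}_\gamma(Q;\mathcal{D}')=\|\bar M^Q_{\mathcal{D}',\gamma}\|_1\le\sqrt K\,\|\bar M^Q_{\mathcal{D}',\gamma}\|_2$. By the triangle inequality,
\[
\|\bar M^Q_{\mathcal{D}',\gamma}\|_2\le \|\bar M^Q_{S',\gamma}\|_2+\|\bar M^Q_{\mathcal{D}',\gamma}-\bar M^Q_{S',\gamma}\|_2 .
\]
So it suffices to show that, with probability $1-\delta$ and for all $Q$ simultaneously,
\[
\|E_Q\|_2\le 2\sqrt{\frac{2\left(\mathrm{KL}(Q\|P)+2K\ln 9+\ln(2/\delta)\right)}{m_{\min}}},
\qquad E_Q:=\bar M^Q_{\mathcal{D}',\gamma}-\bar M^Q_{S',\gamma}.
\]
Multiplying by $\sqrt K$ then gives \eqref{eq:pac}.

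\textbf{Step 1: reduce the spectral norm to finitely many scalar tests.} We use the variational form $\|E\|_2=\sup_{\|u\|=\|v\|=1}u^\top E v$. Take $\tfrac14$-nets $\mathcal N$ of the unit sphere in $\mathbb{R}^K$, each of cardinality at most $9^K$. The standard argument gives $\|E\|_2\le 2\max_{u,v\in\mathcal N}u^\top E v$, since $\sup u^\top E v\le \max_{\mathcal N}u^\top Ev+(\tfrac14+\tfrac14+\tfrac1{16})\|E\|_2$, and we round the constant to $2$. This produces the factor $2$ in front of the square root, and the union bound over the $9^{2K}$ pairs $(u,v)$ produces the $2K\ln 9$ term.

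\textbf{Step 2: a PAC-Bayes bound for each fixed pair $(u,v)$.} Fix $(u,v)$ and write
\[
u^\top M^{S',\gamma}_{\theta}v=\sum_j v_j\,\frac{1}{m_j}\sum_{r:y_r=j}f_{\theta,u}(x'_r,j),
\qquad f_{\theta,u}(x',j):=g^\theta_{\gamma,\kappa}(x',j)\sum_{i\ne j}u_i\,p_\theta(i\mid x').
\]
The population counterpart replaces each inner average by its conditional expectation. We condition on the counts $\{m_j\}$, so that the samples within each class are i.i.d. from $\mathcal{D}'(\cdot\mid y=j)$, and the classes are independent. Since $0<g<1$ and $\sum_{i\ne j}p_\theta(i\mid x')\le 1$, we get $|f_{\theta,u}|\le\|u\|_\infty\le 1$. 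Hence each summand $v_jf(x'_r,j)/m_j$ has range at most $2|v_j|/m_j$. Hoeffding's lemma then gives a sub-Gaussian variance proxy $\sum_j m_j(2|v_j|/m_j)^2/4=\sum_j v_j^2/m_j\le 1/m_{\min}$. This bounds the exponential moment $\mathbb{E}_{S'}\exp(\lambda(\text{pop}-\text{emp}))\le\exp(\lambda^2/(2m_{\min}))$ for each fixed $\tilde w$.

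We integrate this over the prior $P$ and apply Markov's inequality and the Donsker--Varadhan change of measure. The linearity of $u^\top(\cdot)v$ lets $\mathbb{E}_Q$ pass inside, so we bound $u^\top E_Qv$ for the averaged matrices directly. The result is
\[
u^\top E_Q v\le \frac{\mathrm{KL}(Q\|P)+\ln(1/\delta')}{\lambda}+\frac{\lambda}{2m_{\min}} .
\]
Applying this to both signs, or absorbing the sign into $u\to -u$ (which is where the $2$ in $\ln\frac2\delta$ can come from), and union bounding with $\delta'=\delta/(2\cdot 9^{2K})$, we obtain the sum $\mathrm{KL}+2K\ln9+\ln(2/\delta)$.

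\textbf{Main obstacle.} The step we expect to be hardest is Step 2's optimization over $\lambda$ uniformly in $Q$. The optimal $\lambda=\sqrt{2m_{\min}(\mathrm{KL}+\cdots)}$ depends on $Q$ and must be fixed before seeing the data. We would first try a geometric grid of $\lambda$ values with an additional union bound, accepting a slightly looser constant. If that weakens the constants, we would instead use the $\mathrm{kl}$/Maurer-style bound, $\mathbb{E}\,e^{(m/2)\Delta^2}\le 2\sqrt m$-type estimates, whose square-root form gives $\sqrt{2(\cdot)/m_{\min}}$ directly. This is plausibly the source of the stated $\ln(2/\delta)$ and the factor $2$ inside the root, with any $\ln\sqrt{m}$ slack absorbed into the constants. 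We would also need to check that the gate and probabilities are measurable and bounded for every $\tilde w$, which the sigmoid and softmax ensure.
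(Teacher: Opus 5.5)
Your route is the paper's route: norm conversion plus triangle inequality, the bilinear form over a $\tfrac14$-net of size $9^K$, stratified Hoeffding with proxy $\sum_j v_j^2/m_j\le 1/m_{\min}$, change of measure, and a union bound. As written, however, it does not deliver the stated constants.

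First, a minor slip in Step 1. Your three-term expansion gives $\|E\|_2\le\max_{\mathcal N}u^\top Ev+\tfrac{9}{16}\|E\|_2$, i.e.\ $\|E\|_2\le\tfrac{16}{7}\max_{\mathcal N}u^\top Ev$, and $16/7$ cannot be rounded down to $2$. Use the two-term telescoping $u^\top Ev-\hat u^\top E\hat v=(u-\hat u)^\top Ev+\hat u^\top E(v-\hat v)$ with $\|\hat u\|_2=1$ instead. This gives $\|E\|_2\le\max_{\mathcal N}\hat u^\top E\hat v+\tfrac12\|E\|_2$, hence exactly the factor $2$ that the paper invokes.

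The real gap is the one you flag yourself. A fixed-$\lambda$ change of measure gives $(\mathrm{KL}(Q\|P)+\ln(1/\delta'))/\lambda+\lambda/(2m_{\min})$, and the $\lambda$ that produces the square-root form depends on $Q$. A $\lambda$-grid or a Maurer-type $2\sqrt{m}$ moment bound adds a grid-size or $\ln\sqrt{m_{\min}}$ term, and possibly a multiplicative loss. These cannot be ``absorbed into the constants'' because the theorem's constants are explicit.

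The paper's Step~4 does not settle this either. It asserts $|\mathbb{E}_{Q}Z|\le\sqrt{2(\sum_j v_j^2/m_j)(\mathrm{KL}+\ln(2/\delta_0))}$ uniformly in $Q$ ``by the standard change-of-measure argument''. It uses the same range-$[-1,1]$ accounting you use, and the fixed-$\lambda$ argument does not yield that. Two ingredients close the gap with exactly the stated constants.

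\textbf{(i) Sharpen the range.} The gated masses $a_i=g\,p_\theta(i\mid x')\ge 0$ satisfy $\sum_{i\ne j}a_i\le 1$. Hence $f_{\theta,u}(x',j)=\sum_{i\ne j}u_ia_i$ lies in $[\min(0,\min_{i\ne j}u_i),\,\max(0,\max_{i\ne j}u_i)]$, an interval of length at most $|u_a|+|u_b|\le\sqrt2$ for unit $u$. Hoeffding then gives the proxy $\sigma^2\le\frac12\sum_jv_j^2/m_j\le 1/(2m_{\min})$.

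\textbf{(ii) Avoid choosing $\lambda$ at all.} Write $e^{sZ^2}$ as the expectation of $e^{\sqrt{2s}\,gZ}$ over an independent standard Gaussian $g$. This gives $\mathbb{E}_{S'}e^{sZ_w^2}\le(1-2s\sigma^2)^{-1/2}$, which is at most $\sqrt2$ for $s=m_{\min}/2$. Integrate over $P$, apply Markov, then Donsker--Varadhan to $sZ_w^2$, then Jensen $(\mathbb{E}_QZ)^2\le\mathbb{E}_QZ^2$. With probability at least $1-\delta_0$, for all $Q$ simultaneously, $|\mathbb{E}_QZ|\le\sqrt{2(\mathrm{KL}(Q\|P)+\ln(\sqrt2/\delta_0))/m_{\min}}$.

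Taking $\delta_0=\delta/9^{2K}$ gives $\ln(\sqrt2\cdot 9^{2K}/\delta)\le 2K\ln 9+\ln(2/\delta)$, and the rest of your argument goes through. Note that (ii) alone, with your proxy $1/m_{\min}$, forces $s<m_{\min}/2$. That costs a factor $1/(1-\epsilon)$ on the $\mathrm{KL}$ term, which cannot be absorbed when $\mathrm{KL}$ is large. The halved proxy from (i) is what makes the stated bound reachable.
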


The theorem gives the control chain used by our method. The right-hand first term is the
empirical vulnerable spectral risk measured on $S'$, which is the quantity later
targeted by the plug-in regularizer. The second term is the price of adaptation:
it grows with $\mathrm{KL}(Q\|P)$ and decreases with the smallest option-wise
sample size $m_{\min}$. The buffer $\gamma$ changes which examples contribute to
the vulnerable matrix, but it does not add a separate complexity term. The proof
is given in Appendix~\ref{app:proof_pacbayes_spectral}.

\subsection{From Posterior Risk to Deterministic Worst-class Risk}
\label{sec:gibbs_to_det}
\label{sec:deterministic_bridge}

Theorem~\ref{thm:pac_bayes_control} controls a posterior-averaged risk, while
the model used at test time is the deterministic adapted model $\theta(\mu)$.
We now connect these two objects through a simple logit-stability condition.

\begin{definition}[Deterministic worst-class risk]
For a deterministic model $\theta$, the perturbed worst-class risk is
$\mathrm{WCR}^{\mathrm{det}}(\theta;\mathcal{D}')
:=
\max_{j\in[K]}
\Pr_{(x',y)\sim\mathcal{D}'}
(\hat y^{\mathrm{det}}_{\theta}(x')\neq j\mid y=j)$.
\end{definition}

This is the hard-error quantity used for downstream evaluation. It differs from
$\mathrm{VWR}_{\gamma}$, which measures gated off-option probability mass rather
than hard prediction errors.

\begin{definition}[Posterior logit stability]
For a posterior sample $\widetilde w\sim Q$, define the largest option-score
shift from the posterior mean model as
\begin{equation}
\Xi_Q(\mu,\widetilde w)
:=
\sup_{(x',y)\in\mathrm{supp}(\mathcal{D}')}
\max_{k\in[K]}
\left|
s_{\theta(\widetilde w)}(x',k)-s_{\theta(\mu)}(x',k)
\right|.
\label{eq:posterior_logit_shift}
\end{equation}
We say that the posterior mean is $(\gamma,\rho)$-stable if
$\Pr_{\widetilde w\sim Q}(\Xi_Q(\mu,\widetilde w)\leq\gamma/2)\geq 1-\rho$.
\end{definition}

The stability condition says that most posterior samples stay close to the mean
model in option-score space. On the stable event
$\Xi_Q(\mu,\widetilde w)\leq\gamma/2$, every option score changes by at most
$\gamma/2$, so the margin changes by at most $\gamma$. The same buffer $\gamma$
therefore has two roles: it defines the vulnerable band in the gate, and it also
absorbs bounded posterior score fluctuations around the mean model. As a result,
when the mean model makes a deterministic error on a perturbed example, every
stable posterior sample remains inside the $\gamma$-vulnerable region counted by
the gate. Only the unstable $\rho$-probability event can escape this accounting.
The detailed case analysis is shown in Appendix~\ref{app:stability_event_cases}.

\begin{proposition}[Deterministic bridge under logit stability]
\label{prop:deterministic_bridge}
Suppose the posterior mean is $(\gamma,\rho)$-stable. Also suppose that the gate
satisfies $g_{\gamma,\kappa}^{\theta(\widetilde w)}(x',j)\geq\eta$ whenever
$\Delta_{\theta(\widetilde w)}(x',j)\leq\gamma$, for some $\eta>0$. Then
\[
\mathrm{WCR}^{\mathrm{det}}(\theta(\mu);\mathcal{D}')
\leq
\eta^{-1}(1+e^{\gamma})
\mathrm{VWR}_{\gamma}(Q;\mathcal{D}')
+
\rho .
\]
\end{proposition}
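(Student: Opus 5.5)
The plan is to prove the bound one ground-truth option $j$ at a time and then take the maximum over $j$. The argument is pointwise: for a fixed perturbed example $(x',j)$ on which the mean model errs, and a fixed \emph{stable} posterior sample $\widetilde w$, we lower-bound the gated off-option mass $g^{\theta(\widetilde w)}_{\gamma,\kappa}(x',j)\sum_{i\neq j}p_{\theta(\widetilde w)}(i\mid x')$ by the constant $\eta/(1+e^{\gamma})$. Averaging over $x'$ and $\widetilde w$ then turns this into a bound by the $j$-th column sum of $\bar M^{Q}_{\mathcal{D}',\gamma}$.

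\textbf{Step 1: the stable event keeps mean-model errors inside the buffer.} Let $E_\mu(x')=\{\hat y^{\mathrm{det}}_{\theta(\mu)}(x')\neq j\}$, so that on $E_\mu(x')$ we have $\Delta_{\theta(\mu)}(x',j)\le 0$. Let $A=\{\widetilde w:\Xi_Q(\mu,\widetilde w)\le\gamma/2\}$. Since $\Xi_Q$ takes a supremum over $\mathrm{supp}(\mathcal{D}')$, the event $A$ does not depend on $x'$. On $A$ every score moves by at most $\gamma/2$, so the margin moves by at most $\gamma$. Hence on $E_\mu(x')\cap A$ we get $\Delta_{\theta(\widetilde w)}(x',j)\le\gamma$. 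By the gate hypothesis this gives $g^{\theta(\widetilde w)}_{\gamma,\kappa}(x',j)\ge\eta$.

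\textbf{Step 2: a softmax lower bound on off-option mass.} If $\Delta_{\theta(\widetilde w)}(x',j)\le\gamma$, there is some $k\neq j$ with $s(x',k)\ge s(x',j)-\gamma$, so $p(k\mid x')\ge e^{-\gamma}p(j\mid x')$. Write $q=\sum_{i\neq j}p(i\mid x')=1-p(j\mid x')$. Then $q\ge p(k\mid x')\ge e^{-\gamma}(1-q)$, which rearranges to
\[
q\ge \frac{1}{1+e^{\gamma}}.
\]
Combining this with Step 1 yields the pointwise inequality
\[
\mathbf{1}\{E_\mu(x')\}\,\mathbf{1}\{\widetilde w\in A\}
\le \eta^{-1}(1+e^{\gamma})\,g^{\theta(\widetilde w)}_{\gamma,\kappa}(x',j)\sum_{i\neq j}p_{\theta(\widetilde w)}(i\mid x').
\]

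\textbf{Step 3: integrate and take the maximum.} Fix $j$. Since $E_\mu(x')$ does not depend on $\widetilde w$,
\[
\Pr(E_\mu\mid y=j)=\mathbb{E}_{\widetilde w\sim Q}\,\mathbb{E}_{x'\mid y=j}\big[\mathbf{1}\{E_\mu(x')\}(\mathbf{1}_A+\mathbf{1}_{A^c})\big].
\]
The $A^c$ part is at most $Q(A^c)\le\rho$. For the $A$ part, apply the pointwise inequality of Step 2 and exchange the two expectations by Tonelli, which is valid because all terms are nonnegative. The result is $\eta^{-1}(1+e^{\gamma})$ times the $j$-th column sum of $\bar M^{Q}_{\mathcal{D}',\gamma}$, since the diagonal entry is zero. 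That column sum is at most $\|\bar M^{Q}_{\mathcal{D}',\gamma}\|_1=\mathrm{VWR}_\gamma(Q;\mathcal{D}')$. Taking the maximum over $j$ gives the claim.

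The main obstacle is Step 2, converting a margin condition into a uniform lower bound on off-option probability; this is where the factor $(1+e^{\gamma})$ comes from. A secondary point needs care: the unstable event must be handled uniformly in $x'$. This works only because $\Xi_Q$ is defined as a supremum over the support, so $A$ is a property of $\widetilde w$ alone.
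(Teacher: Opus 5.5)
Your proof is correct and follows essentially the paper's argument. It uses the same two ingredients: on the stable event a mean-model error forces $\Delta_{\theta(\widetilde w)}(x',j)\le\gamma$, and the softmax bound gives $1-p_{\theta}(j\mid x')\ge (1+e^{\gamma})^{-1}$. The only difference is that you fuse the paper's two lemmas (the risk-level bridge through the margin-failure risk $R_{\mathrm{mf},\gamma}$, and the lemma bounding that risk by $\mathrm{VWR}_\gamma$) into one pointwise inequality, rather than passing through $R_{\mathrm{mf},\gamma}$ explicitly.
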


% The factor $(1+e^{\gamma})$ comes from converting a margin event into a lower
% bound on off-option probability mass: if an option lies within margin $\gamma$,
% then the total probability assigned to wrong options is at least
% $(1+e^{\gamma})^{-1}$. The factor $\eta^{-1}$ accounts for the smooth gate, and
% $\rho$ accounts for posterior samples whose logits move too far from the mean.
The proof is given in Appendix~\ref{app:deterministic_bridge}-\ref{app:proof_margin_failure_to_vwr}.
Together, Theorem~\ref{thm:pac_bayes_control} and
Proposition~\ref{prop:deterministic_bridge} give the control route used in this
paper. The empirical spectral term controls the posterior vulnerable risk, and
logit stability transfers this control to the deterministic model evaluated at
test time. In the experiments, perturbed worst-class accuracy is the downstream
readout, while $\mathrm{VWR}_{\gamma}$ and $\mathrm{VSR}_{\gamma}$ are the
theory-facing quantities.

\subsection{Plug-in Spectral Safety Control}
\label{sec:method}
\label{sec:method_overview}
\label{sec:plugin_control}

The theory suggests a direct plug-in principle: reduce the empirical vulnerable
spectral risk, while encouraging local logit stability under input and coordinate
perturbations. We use the same trainable coordinate notation $w$ as in
Definition~\ref{def:randomized_adaptation}.

\begin{definition}[Plug-in objective]
Let $w$ denote the current trainable coordinates, and let
$\mathcal{L}_{\mathrm{base}}$ be a standard training objective. For a fixed
safety buffer $\gamma$, we optimize
\begin{equation}
\mathcal{L}_{\mathrm{total}}(w)
:=
\mathcal{L}_{\mathrm{base}}(w)
+
\alpha \mathcal{R}_{\mathrm{spec}}(w;\gamma)
+
\beta \mathcal{R}_{\mathrm{stab}}(w),
\label{eq:plugin_objective}
\end{equation}
where $\alpha,\beta\geq 0$ are validation-selected weights.
Here $\mathcal{R}_{\mathrm{spec}}$ targets the empirical spectral term in
Theorem~\ref{thm:pac_bayes_control}, while $\mathcal{R}_{\mathrm{stab}}$ is an
output-level proxy for the logit-stability condition in
Proposition~\ref{prop:deterministic_bridge}.
\end{definition}

\begin{definition}[Mini-batch spectral regularizer]
Given a paired mini-batch, 
% let $B\subseteq\{1,\ldots,m\}$ be its index set and
let $B_j:=\{r\in B:y_r=j\}$. For every option $j$ with $|B_j|>0$, we form the
batch margin-aware estimator
% \begin{equation}
$(\widetilde M_{w}^{B,\gamma})_{ij}
:=
\frac{1}{|B_j|}
\sum_{r\in B_j}
g_{\gamma,\kappa}^{\theta(w)}(x'_r,j)
p_{\theta(w)}(i\mid x'_r),
 i\neq j,
(\widetilde M_{w}^{B,\gamma})_{jj}:=0$.
% \end{equation}
Then,
\begin{equation}
\mathcal{R}_{\mathrm{spec}}(w;\gamma)
:=
\widehat{\mathrm{VSR}}_\gamma(w;B)
:=
\left\|
\widetilde M_{w}^{B,\gamma}
\right\|_2 .
\label{eq:soft_M_batch}
% \label{eq:R_spec_def}
\end{equation}
\end{definition}
If option $j$ is absent from the mini-batch, its column is skipped for that
update. 
This term suppresses the dominant structured mode of vulnerable probability
flow; implementation details are given in Appendix~\ref{app:power_iter_grad}.

\begin{definition}[Local stability regularizer]
Let $\widetilde Q_w$ be a local perturbation distribution centered at $w$, and
draw $\tilde w\sim\widetilde Q_w$. For the same paired mini-batch $B$, define
\begin{equation}
\mathcal{R}_{\mathrm{stab}}(w)
:=
\mathbb{E}_{\tilde w\sim \widetilde Q_w}
\left[
\frac{1}{|B|}
\sum_{r\in B}
\mathrm{KL}
\left(
\operatorname{sg}\!\left[p_{\theta(w)}(\cdot\mid x_r)\right]
\,\middle\|\,
p_{\theta(\tilde w)}(\cdot\mid x'_r)
\right)
\right],
\label{eq:R_stab_def}
\end{equation}
where $\operatorname{sg}[\cdot]$ denotes stop-gradient.
This term penalizes prediction changes caused jointly by input perturbation and
local coordinate noise. Its coordinate-noise component discourages large centered
logit and margin shifts, making it a practical proxy for reducing the
stability-failure probability $\rho$ in
Proposition~\ref{prop:deterministic_bridge}. See
Appendix~\ref{app:stab_local_interpretation}.
\end{definition}
\section{Experiments}
\label{sec:exp}

\subsection{Experimental setup}
\label{sec:exp_setup}\label{sec:protocol}

To validate the theoretical control route and test its practical effect, we
evaluate FragileFlow under a unified finite-option prediction protocol across
two perturbation channels. In LLM tasks, perturbations modify the question text;
in VLM tasks, perturbations modify the image input. In both cases, the model
produces a distribution over candidate options, so the same margin-aware
error-flow matrix, calibrated buffer, and worst-class readouts can be used. This
lets us test whether the proposed risk-control mechanism transfers across
text-side and image-side robustness settings.

\noindent\textbf{Models and tasks.}
For LLMs, we evaluate Qwen2.5-0.5B-Instruct and Qwen2.5-1.5B-Instruct
\cite{qwen2025qwen25,qwen2024qwen2515binstruct}, and
Mistral-7B-Instruct-v0.2
\cite{jiang2023mistral7b,mistralai2023mistral7binstructv02} on
ARC-Challenge \cite{clark2018think} and CommonsenseQA
\cite{talmor2019commonsenseqa}. These multiple-choice tasks expose the full
verbalizer distribution, so the class-conditional error-flow matrix can be formed
directly. We test three task-preserving perturbations: typo noise, distractor
insertion, and format rewriting, following text robustness and behavioral testing
protocols \cite{morris2020textattack,ribeiro2020checklist}. For VLMs, we evaluate
CLIP ViT-B/32 \cite{radford2021learning} following \cite{pmlr-v235-schlarmann24a}
with LoRA adaptation \cite{hu2022lora}
on DTD \cite{cimpoi2014describing}, OxfordPets \cite{parkhi2012cats}, and
Caltech101 \cite{fei2007learning}. Robustness is measured on PGD-perturbed test
images \cite{madry2018towards}, following the adversarial LoRA adaptation
protocol of \cite{ghiasvand2025fewshotadversariallowrankfinetuning}.

\noindent\textbf{Baselines and plug-in placement.}
For LLMs, we attach FragileFlow to cross-entropy training (CE) with augmentation \cite{morris2020textattack}, R3F
\cite{aghajanyan2021better}, and SMART \cite{jiang2020smart}, covering
data-level augmentation, randomized embedding smoothness, and local smoothness
regularization. Plain CE is included as an unpaired reference. For VLMs, we apply
the plug-in to the \textit{inner} PGD step, the \textit{outer} model update, or \textit{both}, separating
whether the regularizer shapes adversarial-example construction, regularizes the
adapted model, or combines the two effects.

\noindent\textbf{Metrics and protocol.}
We report clean accuracy, perturbed worst-class accuracy,
$\widehat{\mathrm{VWR}}_{\gamma}$, and $\widehat{\mathrm{VSR}}_{\gamma}$; the
VLM table additionally reports perturbed average accuracy and clean worst-class
accuracy. Higher accuracy is better, while lower
$\widehat{\mathrm{VWR}}_{\gamma}$ and $\widehat{\mathrm{VSR}}_{\gamma}$ indicate
less margin-vulnerable probability flow. All paired comparisons use the same
data split, perturbation, random seeds, and calibrated
buffer.

% \noindent\textbf{Compute resources.} All experiments are conducted in Python\textsuperscript{\textregistered} on a machine equipped with an AMD EPYC\textsuperscript{\textregistered} 7452 32-Core Processor, 128GB of RAM, and one A100 GPU with 40GB of VRAM.

\subsection{Safety-buffer calibration}
\label{sec:exp_gamma}

The buffer $\gamma$ determines which perturbed examples are counted as
margin-vulnerable by the error-flow matrix. Since raw logit margins vary across
models and tasks, we do not fix $\gamma$ as a global numeric value. Instead, for
each setting, we compute margins on a held-out perturbed validation split and set
$\gamma=\gamma_q$, the $q$-th quantile of the validation-margin distribution. We
sweep $q\in\{0.10,0.25,0.50\}$ and compare each base learner with its plug-in
counterpart under the same calibrated buffer.

Figure~\ref{fig:gamma_calibration} shows the calibration behavior on the LLM
settings. The last two columns report $\widehat{\mathrm{VWR}}_{\gamma}$ and
$\sigma_{\max}$, where
$\sigma_{\max}=\widehat{\mathrm{VSR}}_{\gamma}$ represents the empirical spectral norm
of the margin-aware error-flow matrix. As $q$ increases, the buffer covers a
wider vulnerable region, so more perturbed examples and off-option probability
mass are counted; the measured vulnerable risks therefore increase by design.
The meaningful comparison is within the same $q$, where the plug-in consistently
moves the operating point toward lower vulnerable flow. The intermediate choice
$q=0.25$ gives the most stable trade-off: it preserves clean accuracy, maintains
or improves perturbed worst-class accuracy, and reduces the two margin-aware risk
measures. We therefore use $\gamma_{25}$, the 25th percentile of held-out
perturbed validation margins, in all remaining experiments.

% Main calibration figure that justifies the choice gamma = 0.25 used in the rest of the paper.
\begin{figure}[h]
    \centering
    \includegraphics[width=0.92\textwidth]{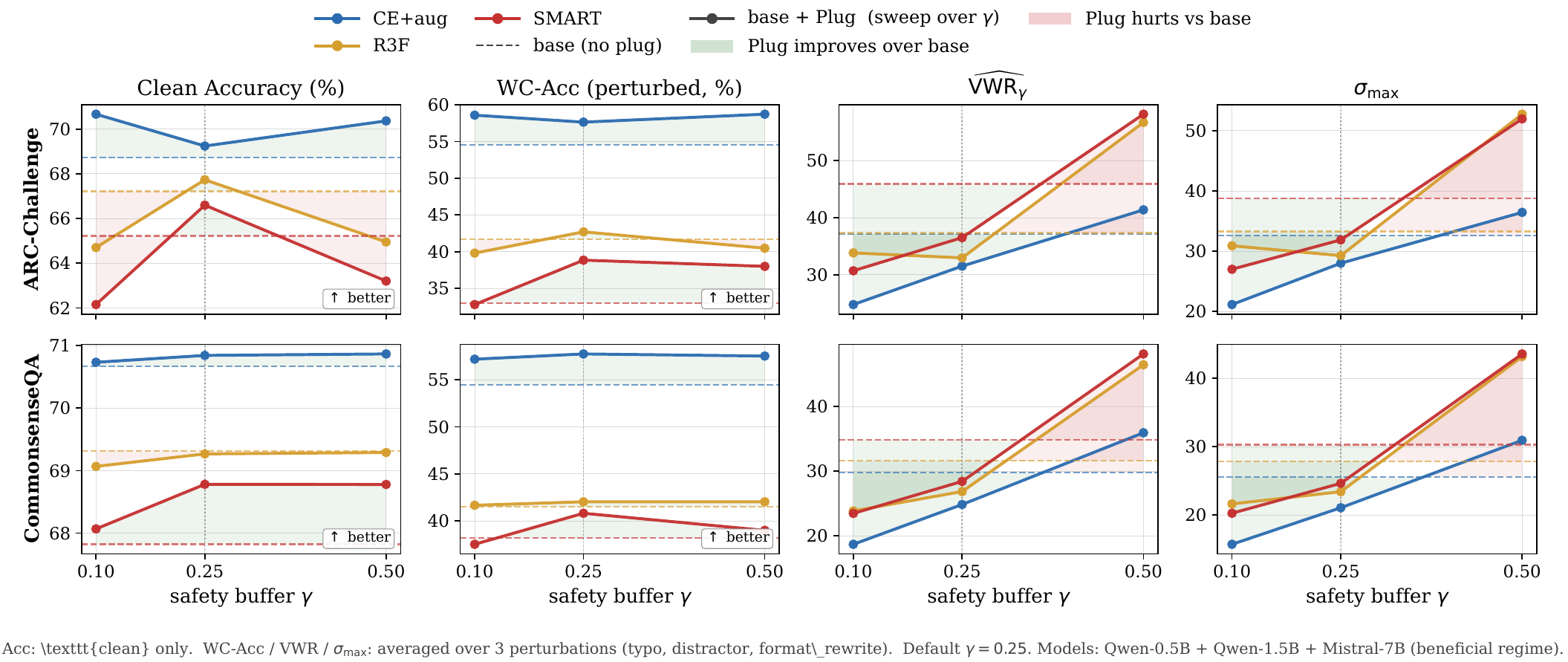}
% \caption{\textbf{Safety-buffer calibration for LLM robustness.}
% Results are averaged over three LLMs and two datasets. Dashed lines denote
% base learners, and solid curves denote plug-in counterparts as the
% validation-margin quantile $q$ varies. 
% Larger $q$ widens the
% vulnerable region, so risk values naturally increase in the two rightmost columns; comparisons are made
% within the same $q$. The intermediate buffer $\gamma_{25}$ provides the most
% stable trade-off and is used in the main experiments.}
\caption{\textbf{Safety-buffer calibration for LLM robustness.}
Results are averaged over three LLMs and two datasets. Dashed lines are base learners, and solid curves are plug-in counterparts. 
Larger $q$ widens the vulnerable region, so risk values increase by construction.
% comparisons are made within the same $q$. 
% We use the intermediate buffer $\gamma_{25}$ in the main experiments.
}
\label{fig:gamma_calibration}
\end{figure}

\subsection{Main results: reducing vulnerable error flow}
\label{sec:exp_main_results}
\label{sec:exp_llm_main}\label{sec:exp_vlm_main}

With the safety buffer fixed to $\gamma_{25}$, we evaluate whether the plug-in
realizes the control mechanism predicted by the theory and whether this control
translates into practical robustness gains. A limited sweep over $(\alpha,\beta)$
is reported in Appendix~\ref{app:alphabeta_sweep}; in the main experiments, we
use one shared default setting across models, datasets, and base learners.

Table~\ref{tab:llm_main_grid} reports the LLM results. Across the three models,
two datasets, and three robust adaptation objectives, FragileFlow reduces both
$\widehat{\mathrm{VWR}}_{\gamma}$ and $\widehat{\mathrm{VSR}}_{\gamma}$ in every
paired comparison. This directly verifies the intended mechanism: under the same
calibrated buffer, the plug-in compresses the dominant margin-vulnerable
error-flow structure rather than merely changing average accuracy. The downstream
metrics show the same trend in most cases. Perturbed worst-class accuracy
improves in most paired comparisons, while clean accuracy remains comparable to
the corresponding base learner. In the few cases where perturbed worst-class
accuracy does not increase, we keep the results unfiltered and still observe
lower values for both controlled risk quantities. This separation is expected:
the plug-in directly targets vulnerable probability flow, while its conversion
into deterministic worst-class accuracy also depends on the stability bridge in
Proposition~\ref{prop:deterministic_bridge} and on the optimization behavior of
the base learner.

\definecolor{plugblue}{RGB}{235,244,255}

\newcolumntype{P}{>{\columncolor{plugblue}\arraybackslash}c}

\setlength{\aboverulesep}{0pt}
\setlength{\belowrulesep}{1pt}
\setlength{\cmidrulesep}{0pt}

\begin{table*}[h]
\centering
\caption{\textbf{Main LLM results at the calibrated buffer $\gamma_{25}$.}
Results are reported over three paired seeds. Perturbed-side metrics average
over typo noise, distractor insertion, and format rewriting; clean accuracy is
measured on the original test set. Lower is better for
$\widehat{\mathrm{VWR}}_{\gamma}$ and $\widehat{\mathrm{VSR}}_{\gamma}$; higher
is better for accuracy. Bold indicates improvement over the paired base learner.}
\scriptsize
\setlength{\tabcolsep}{2.2pt}
\renewcommand{\arraystretch}{1.08}
\resizebox{0.9\textwidth}{!}{%
\begin{tabular}{ll l cP cP cP cP}
\toprule
\multirow{2}{*}{\textbf{Model}} &
\multirow{2}{*}{\textbf{Dataset}} &
\multirow{2}{*}{\textbf{Method}} &
\multicolumn{2}{c}{{$\widehat{\mathrm{VWR}}_\gamma$} $\downarrow$} &
\multicolumn{2}{c}{{$\widehat{\mathrm{VSR}}_{\gamma}$} $\downarrow$} &
\multicolumn{2}{c}{Ptb WC Acc $\uparrow$} &
\multicolumn{2}{c}{Clean Acc $\uparrow$} \\
\cmidrule(lr){4-5}\cmidrule(lr){6-7}\cmidrule(lr){8-9}\cmidrule(lr){10-11}
& & 
& base & \multicolumn{1}{P}{+plug}
& base & \multicolumn{1}{P}{+plug}
& base & \multicolumn{1}{P}{+plug}
& base & \multicolumn{1}{P}{+plug} \\
\midrule

\multirowcell{4}[0pt][c]{Qwen-0.5B} & \multirowcell{4}[0pt][c]{ARC-C}
& CE
& \emph{49.64 $\pm$ 1.73} & --
& \emph{45.06 $\pm$ 2.02} & --
& \emph{26.40 $\pm$ 2.17} & --
& \emph{52.33 $\pm$ 2.52} & -- \\
& & CE+aug
& 48.07 $\pm$ 1.55 & \textbf{38.46 $\pm$ 1.42}
& 43.34 $\pm$ 1.47 & \textbf{34.26 $\pm$ 2.15}
& 38.58 $\pm$ 8.92 & 38.31 $\pm$ 8.00
& 54.20 $\pm$ 3.41 & 53.87 $\pm$ 3.19 \\
& & R3F
& 49.68 $\pm$ 1.76 & \textbf{39.45 $\pm$ 1.07}
& 45.00 $\pm$ 2.07 & \textbf{35.64 $\pm$ 0.54}
& 26.40 $\pm$ 2.21 & \textbf{28.09 $\pm$ 3.34}
& 52.13 $\pm$ 2.44 & \textbf{53.93 $\pm$ 1.89} \\
& & SMART
& 53.70 $\pm$ 3.68 & \textbf{36.83 $\pm$ 4.00}
& 46.85 $\pm$ 0.84 & \textbf{34.11 $\pm$ 2.55}
& 22.04 $\pm$ 2.42 & \textbf{27.73 $\pm$ 2.82}
& 52.00 $\pm$ 0.53 & \textbf{53.13 $\pm$ 0.61} \\
\hdashline

\multirowcell{4}[0pt][c]{Qwen-0.5B} & \multirowcell{4}[0pt][c]{CSQA}
& CE
& \emph{40.85 $\pm$ 2.88} & --
& \emph{36.73 $\pm$ 0.70} & --
& \emph{28.56 $\pm$ 2.82} & --
& \emph{57.87 $\pm$ 2.32} & -- \\
& & CE+aug
& 38.32 $\pm$ 3.46 & \textbf{29.30 $\pm$ 2.73}
& 34.02 $\pm$ 2.17 & \textbf{25.79 $\pm$ 1.56}
& 40.33 $\pm$ 6.61 & \textbf{43.22 $\pm$ 7.15}
& 59.27 $\pm$ 1.92 & \textbf{59.73 $\pm$ 2.81} \\
& & R3F
& 40.84 $\pm$ 2.46 & \textbf{32.03 $\pm$ 2.78}
& 36.95 $\pm$ 1.13 & \textbf{28.59 $\pm$ 1.25}
& 28.67 $\pm$ 2.56 & 28.22 $\pm$ 2.72
& 57.53 $\pm$ 2.55 & \textbf{57.73 $\pm$ 2.02} \\
& & SMART
& 39.22 $\pm$ 2.83 & \textbf{31.47 $\pm$ 2.53}
& 34.80 $\pm$ 1.60 & \textbf{27.45 $\pm$ 1.50}
& 30.22 $\pm$ 2.37 & \textbf{32.89 $\pm$ 2.24}
& 59.33 $\pm$ 2.60 & \textbf{60.80 $\pm$ 2.78} \\
\hdashline

\multirowcell{4}[0pt][c]{Qwen-1.5B} & \multirowcell{4}[0pt][c]{ARC-C}
& CE
& \emph{30.87 $\pm$ 2.29} & --
& \emph{27.83 $\pm$ 2.57} & --
& \emph{48.98 $\pm$ 4.94} & --
& \emph{74.13 $\pm$ 2.39} & -- \\
& & CE+aug
& 30.34 $\pm$ 4.12 & \textbf{26.46 $\pm$ 1.52}
& 26.30 $\pm$ 1.58 & \textbf{23.77 $\pm$ 1.25}
& 63.84 $\pm$ 4.07 & \textbf{68.62 $\pm$ 3.69}
& 76.20 $\pm$ 1.71 & \textbf{76.93 $\pm$ 0.42} \\
& & R3F
& 30.95 $\pm$ 2.19 & \textbf{28.43 $\pm$ 2.63}
& 27.85 $\pm$ 2.57 & \textbf{26.33 $\pm$ 3.14}
& 47.89 $\pm$ 5.03 & \textbf{48.80 $\pm$ 4.97}
& 74.33 $\pm$ 2.25 & 74.20 $\pm$ 2.62 \\
& & SMART
& 31.61 $\pm$ 1.04 & \textbf{29.09 $\pm$ 1.89}
& 27.49 $\pm$ 0.45 & \textbf{25.40 $\pm$ 1.07}
& 50.18 $\pm$ 5.91 & \textbf{52.71 $\pm$ 5.80}
& 75.00 $\pm$ 0.53 & 74.93 $\pm$ 0.70 \\
\hdashline

\multirowcell{4}[0pt][c]{Qwen-1.5B} & \multirowcell{4}[0pt][c]{CSQA}
& CE
& \emph{28.12 $\pm$ 1.39} & --
& \emph{24.92 $\pm$ 1.93} & --
& \emph{45.67 $\pm$ 6.16} & --
& \emph{73.20 $\pm$ 0.92} & -- \\
& & CE+aug
& 25.20 $\pm$ 0.89 & \textbf{21.87 $\pm$ 1.25}
& 21.84 $\pm$ 0.51 & \textbf{18.86 $\pm$ 0.93}
& 62.22 $\pm$ 1.77 & \textbf{65.67 $\pm$ 1.80}
& 75.40 $\pm$ 0.72 & 75.13 $\pm$ 0.76 \\
& & R3F
& 27.82 $\pm$ 1.48 & \textbf{24.46 $\pm$ 1.60}
& 24.71 $\pm$ 1.78 & \textbf{21.64 $\pm$ 2.17}
& 44.67 $\pm$ 6.28 & \textbf{45.89 $\pm$ 6.44}
& 73.53 $\pm$ 0.81 & 73.40 $\pm$ 1.06 \\
& & SMART
& 27.57 $\pm$ 2.42 & \textbf{23.41 $\pm$ 1.78}
& 24.25 $\pm$ 1.46 & \textbf{20.26 $\pm$ 1.18}
& 49.78 $\pm$ 6.27 & \textbf{50.67 $\pm$ 5.52}
& 74.80 $\pm$ 2.51 & \textbf{74.87 $\pm$ 2.91} \\
\hdashline

\multirowcell{4}[0pt][c]{Mistral-7B} & \multirowcell{4}[0pt][c]{ARC-C}
& CE
& \emph{31.41 $\pm$ 4.52} & --
& \emph{27.13 $\pm$ 1.71} & --
& \emph{52.18 $\pm$ 4.97} & --
& \emph{75.33 $\pm$ 0.64} & -- \\
& & CE+aug
& 32.91 $\pm$ 4.15 & \textbf{29.60 $\pm$ 4.76}
& 28.09 $\pm$ 1.34 & \textbf{25.96 $\pm$ 2.01}
& 61.18 $\pm$ 4.21 & \textbf{65.96 $\pm$ 6.26}
& 75.80 $\pm$ 1.25 & \textbf{76.93 $\pm$ 3.56} \\
& & R3F
& 31.32 $\pm$ 3.77 & \textbf{30.99 $\pm$ 2.81}
& 26.99 $\pm$ 1.47 & \textbf{25.80 $\pm$ 1.12}
& 50.73 $\pm$ 4.33 & \textbf{51.20 $\pm$ 4.34}
& 75.20 $\pm$ 0.53 & 75.07 $\pm$ 0.46 \\
& & SMART
& 52.44 $\pm$ 5.96 & \textbf{43.60 $\pm$ 5.65}
& 41.88 $\pm$ 5.33 & \textbf{36.11 $\pm$ 5.93}
& 26.67 $\pm$ 9.98 & \textbf{36.09 $\pm$ 8.82}
& 68.67 $\pm$ 3.90 & \textbf{71.73 $\pm$ 4.32} \\
\hdashline

\multirowcell{4}[0pt][c]{Mistral-7B} & \multirowcell{4}[0pt][c]{CSQA}
& CE
& \emph{26.56 $\pm$ 3.27} & -- 
& \emph{22.13 $\pm$ 0.50} & --
& \emph{51.89 $\pm$ 8.69} & --
& \emph{76.40 $\pm$ 0.92} & -- \\
& & CE+aug
& 25.90 $\pm$ 2.06 & \textbf{23.34 $\pm$ 1.62}
& 20.81 $\pm$ 0.51 & \textbf{18.57 $\pm$ 0.93}
& 60.78 $\pm$ 4.81 & \textbf{64.33 $\pm$ 4.23}
& 77.33 $\pm$ 1.51 & \textbf{77.67 $\pm$ 1.60} \\
& & R3F
& 26.18 $\pm$ 3.92 & \textbf{24.04 $\pm$ 4.26}
& 21.94 $\pm$ 1.13 & \textbf{20.02 $\pm$ 1.62}
& 51.22 $\pm$ 8.55 & \textbf{52.00 $\pm$ 8.84}
& 76.87 $\pm$ 0.76 & 76.67 $\pm$ 0.81 \\
& & SMART
& 37.73 $\pm$ 3.25 & \textbf{30.37 $\pm$ 0.93}
& 31.78 $\pm$ 2.00 & \textbf{26.21 $\pm$ 0.87}
& 34.56 $\pm$ 6.84 & \textbf{38.89 $\pm$ 5.43}
& 69.33 $\pm$ 0.31 & \textbf{70.67 $\pm$ 0.31} \\
\bottomrule
\end{tabular}}

\label{tab:llm_main_grid}
\end{table*}

Table~\ref{tab:vlm_vit_main} provides a cross-modal test of the same mechanism.
Here the perturbation acts on the image input through PGD, rather than on the
text-side question or verbalizer interface. Despite this different perturbation
channel, FragileFlow again lowers $\widehat{\mathrm{VWR}}_{\gamma}$ and
$\widehat{\mathrm{VSR}}_{\gamma}$ across datasets and plug-in placements. Clean
accuracy is largely preserved, and the additional VLM metrics show that the gains
are not confined to the proposed risk scores: perturbed average accuracy and
worst-class readouts remain stable or improve in most settings.

% \setlength{\aboverulesep}{0pt}
% \setlength{\belowrulesep}{1pt}
% \setlength{\cmidrulesep}{0pt}
% \definecolor{pluggreen}{RGB}{238,248,238}
% \newcommand{\pluginrow}[7]{%

% & \cellcolor{pluggreen}#1

% & \cellcolor{pluggreen}#2

% & \cellcolor{pluggreen}#3

% & \cellcolor{pluggreen}#4

% & \cellcolor{pluggreen}#5

% & \cellcolor{pluggreen}#6

% & \cellcolor{pluggreen}#7 \\

% }
\definecolor{plugyellow}{RGB}{252,249,220}
\newcommand{\pluginrow}[7]{%
& \cellcolor{plugyellow}#1
& \cellcolor{plugyellow}#2
& \cellcolor{plugyellow}#3
& \cellcolor{plugyellow}#4
& \cellcolor{plugyellow}#5
& \cellcolor{plugyellow}#6
& \cellcolor{plugyellow}#7 \\
}

\begin{table}[h]
\centering
\small
\setlength{\tabcolsep}{5pt}
\renewcommand{\arraystretch}{1.18}
\caption{\textbf{Cross-modal VLM results on CLIP ViT-B/32.}
Results are reported for 16-shot LoRA adversarial adaptation over three seeds,
with robustness evaluated by 100-step PGD at $\varepsilon=1.0/255$. \textit{Inner},
\textit{outer}, and \textit{both} denote where the plug-in is applied during adaptation. Lower is
better for $\widehat{\mathrm{VWR}}_{\gamma}$ and
$\widehat{\mathrm{VSR}}_{\gamma}$; higher is better for accuracy metrics.}
\label{tab:vlm_vit_main}

\resizebox{0.82\linewidth}{!}{%
\begin{tabular}{lccccccc}
\toprule
\textbf{Dataset} & \textbf{Plugin type}
& $\widehat{\mathrm{VWR}}_\gamma$ $\downarrow$
& $\widehat{\mathrm{VSR}}_{\gamma}$ $\downarrow$
& Clean Acc $\uparrow$
& Ptb Acc $\uparrow$
& Clean WC $\uparrow$
& Ptb WC $\uparrow$ \\
\midrule

\multirow{4}{*}{DTD}
& --    
& $28.33 \pm 0.38$ 
& $38.97 \pm 4.80$ 
& $\mathbf{67.57 \pm 0.93}$ 
& $22.66 \pm 1.12$ 
& $28.70 \pm 3.46$ 
& $0.00$ \\
\pluginrow{\textit{Inner}}
{$27.22 \pm 0.47$}
{$37.47 \pm 5.04$}
{$67.53 \pm 1.04$}
{$\mathbf{22.73 \pm 1.10}$}
{$27.78 \pm 4.54$}
{$0.00$}
\pluginrow{\textit{Outer}}
{$26.91 \pm 0.59$}
{$36.25 \pm 4.83$}
{$67.55 \pm 0.68$}
{$22.56 \pm 1.19$}
{$\mathbf{29.63 \pm 4.72}$}
{$0.00$}
\pluginrow{\textit{Both}}
{$\mathbf{26.89 \pm 0.56}$}
{$\mathbf{36.15 \pm 4.34}$}
{$67.38 \pm 0.43$}
{$22.28 \pm 1.07$}
{$29.63 \pm 3.46$}
{$0.00$}

\hdashline

\multirow{4}{*}{OxfordPets}
& --    
& $63.53 \pm 2.09$ 
& $58.81 \pm 5.48$ 
& $89.52 \pm 0.08$ 
& $20.34 \pm 1.14$ 
& $54.00 \pm 2.45$ 
& $0.00$ \\
\pluginrow{\textit{Inner}}
{$63.27 \pm 2.15$}
{$58.02 \pm 5.94$}
{$89.38 \pm 0.19$}
{$20.72 \pm 1.15$}
{$54.00 \pm 2.94$}
{$\mathbf{0.33 \pm 0.47}$}
\pluginrow{\textit{Outer}}
{$61.60 \pm 2.81$}
{$57.12 \pm 5.65$}
{$89.79 \pm 0.34$}
{$20.70 \pm 1.04$}
{$\mathbf{56.67 \pm 3.40}$}
{$\mathbf{0.33 \pm 0.47}$}
\pluginrow{\textit{Both}}
{$\mathbf{61.53 \pm 2.79}$}
{$\mathbf{56.82 \pm 5.66}$}
{$\mathbf{89.82 \pm 0.44}$}
{$\mathbf{20.82 \pm 1.33}$}
{$56.33 \pm 1.25$}
{$\mathbf{0.33 \pm 0.47}$}

\hdashline

\multirow{4}{*}{Caltech101}
& --    
& $47.30 \pm 8.61$ 
& $22.20 \pm 3.74$ 
& $95.04 \pm 0.25$ 
& $64.60 \pm 2.72$ 
& $42.22 \pm 3.14$ 
& $\mathbf{5.19 \pm 4.09}$ \\
\pluginrow{\textit{Inner}}
{$45.40 \pm 7.20$}
{$\mathbf{20.79 \pm 3.61}$}
{$\mathbf{95.06 \pm 0.30}$}
{$64.72 \pm 2.60$}
{$40.00 \pm 3.44$}
{$\mathbf{5.19 \pm 4.09}$}
\pluginrow{\textit{Outer}}
{$46.14 \pm 9.77$}
{$20.90 \pm 4.31$}
{$94.97 \pm 0.32$}
{$64.72 \pm 2.96$}
{$\mathbf{42.78 \pm 2.83}$}
{$5.00 \pm 3.60$}
\pluginrow{\textit{Both}}
{$\mathbf{45.23 \pm 8.80}$}
{$20.89 \pm 4.14$}
{$94.90 \pm 0.26$}
{$\mathbf{64.75 \pm 2.99}$}
{$42.22 \pm 3.14$}
{$\mathbf{5.19 \pm 4.09}$}

\bottomrule
\end{tabular}}
\end{table}

The placement results further clarify how the regularizer acts. Applying the
plug-in to the inner PGD step shapes adversarial-example construction, whereas
applying it to the outer update directly regularizes the adapted model. These
placements lead to different downstream trade-offs, but the common pattern is
stable: under a fixed calibrated buffer, FragileFlow reduces concentrated
vulnerable error flow while preserving clean utility across both text-side and
image-side robustness settings.

\subsection{Mechanistic ablation: spectral compression and stability}
\label{sec:exp_ablation}

\begin{wrapfigure}[30]{r}{0.56\textwidth}
\vspace{-0.8em}
\centering

% \captionof{table}{\textbf{Compact stability-term ablation summary.}
% Rows report average shifts relative to the corresponding non-plug reference.}
\captionof{table}{\textbf{Compact stability-term ablation summary.}
Values are average shifts relative to the corresponding non-plug reference.
Risk changes are relative; accuracy changes are absolute percentage points.}

\label{tab:compact_beta_ablation}

\scriptsize
\setlength{\tabcolsep}{1.7pt}
\renewcommand{\arraystretch}{0.95}
\resizebox{\linewidth}{!}{%
\begin{tabular}{llccccc}
\toprule
Setting & Objective
& $\Delta\widehat{\mathrm{VSR}}_{\gamma}\downarrow$
& $\Delta\widehat{\mathrm{VWR}}_{\gamma}\downarrow$
& $\Delta\mathrm{WC}\uparrow$
& $\Delta\mathrm{PtbAcc}\uparrow$
& $\Delta\mathrm{CleanAcc}\uparrow$ \\
\midrule
\multirow{2}{*}{LLM}
& $R_{\mathrm{spec}}$ only $(\beta=0)$
& {$-13.02\%$}
& {$-13.14\%$}
& {$+0.48$ pp}
& {$+0.44$ pp}
& {$+0.42$ pp} \\
& $R_{\mathrm{spec}}+\beta R_{\mathrm{stab}}$
& {$-13.97\%$}
& {$-14.37\%$}
& {$+1.52$ pp}
& {$+0.68$ pp}
& {$+0.57$ pp} \\
% \hdashline
% \midrule
\hdashline
\multirow{2}{*}{VLM}
& $R_{\mathrm{spec}}$ only $(\beta=0)$
& {$-5.21\%$}
& {$-3.35\%$}
& {$+0.24$ pp}
& {$+0.33$ pp}
& {$-0.08$ pp} \\
& $R_{\mathrm{spec}}+\beta R_{\mathrm{stab}}$
& {$-6.47\%$}
& {$-4.35\%$}
& {$+0.28$ pp}
& {$+0.28$ pp}
& {$+0.02$ pp} \\
\bottomrule
\end{tabular}}

\vspace{0.7em}

\includegraphics[width=\linewidth]{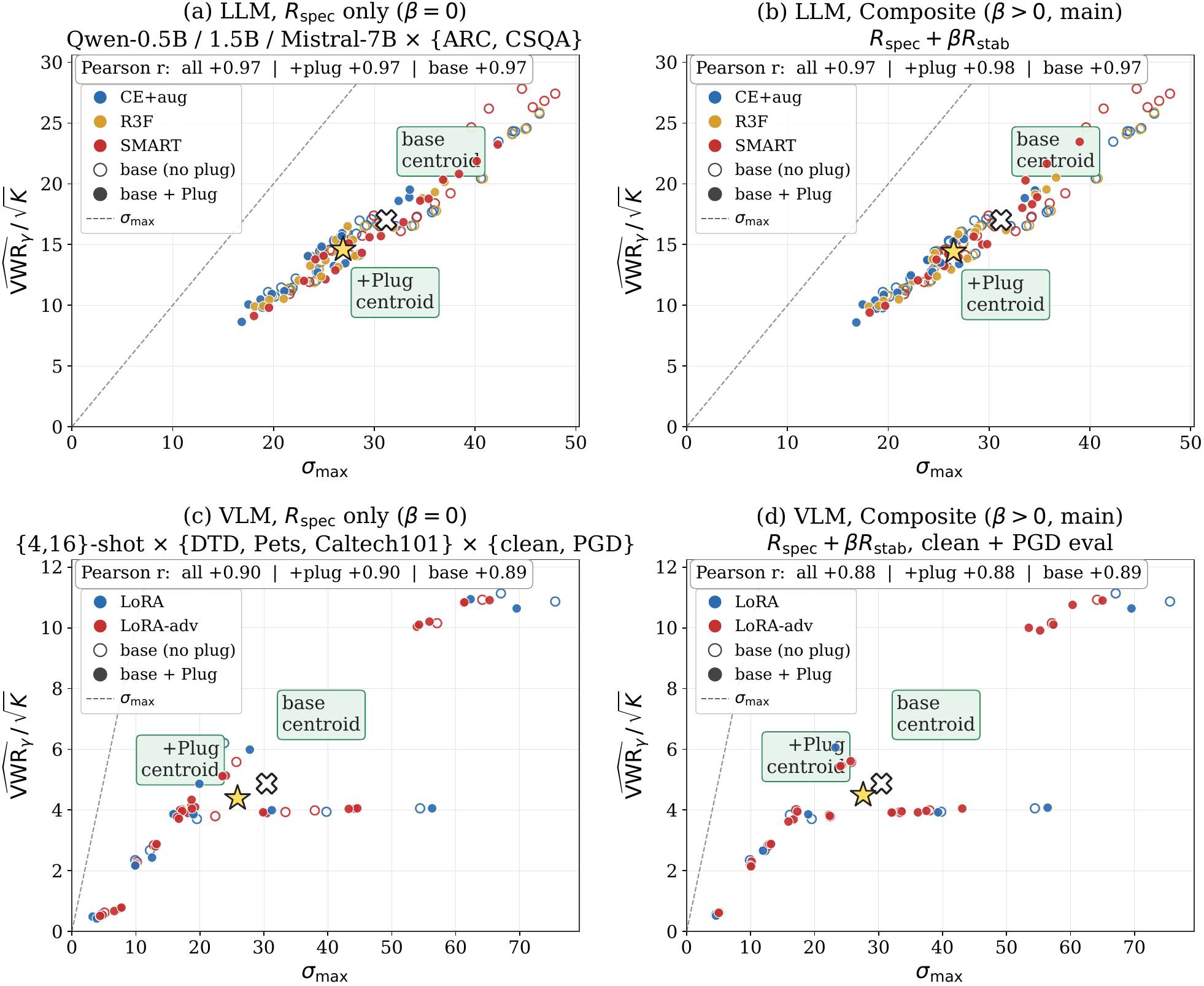}

\vspace{-0.8em}
% \caption{\textbf{Mechanistic ablation of spectral compression and stability.}
% The spectral-only objective compresses the dominant vulnerable-flow mode, while the stability term improves transfer to downstream robustness.}
\captionof{figure}{\textbf{Spectral compression with and without stability.}
Each panel plots $\sigma_{\max}=\widehat{\mathrm{VSR}}_{\gamma}$ against
$\widehat{\mathrm{VWR}}_{\gamma}/\sqrt{K}$. Centroids summarize the base and
plug-in runs under the spectral-only and composite objectives.}
\label{fig:b0_ablation}
\vspace{-1.0em}
\end{wrapfigure}

We finally isolate the two components of FragileFlow. The spectral term
$R_{\mathrm{spec}}$ directly penalizes the dominant mode of the
margin-aware error-flow matrix, measured by
$\sigma_{\max}=\widehat{\mathrm{VSR}}_{\gamma}$. The stability term
$R_{\mathrm{stab}}$ is not meant to replace this spectral mechanism; it is added
to make the controlled risk less sensitive to local coordinate perturbations,
which is the condition used by the deterministic bridge in
Proposition~\ref{prop:deterministic_bridge}.

Figure~\ref{fig:b0_ablation} visualizes this mechanism. Each panel plots
$\sigma_{\max}$ against the normalized vulnerable worst-class mass
$\widehat{\mathrm{VWR}}_{\gamma}/\sqrt{K}$. In both LLM and VLM settings, the
spectral-only objective ($\beta=0$) already moves the plug-in centroid toward the
lower-left region, showing that $R_{\mathrm{spec}}$ alone compresses the intended
vulnerable-flow geometry. The composite objective
$R_{\mathrm{spec}}+\beta R_{\mathrm{stab}}$ preserves the same direction rather
than changing the mechanism, which supports the intended complementary role of
the stability term.

Table~\ref{tab:compact_beta_ablation} shows the aggregate shifts relative to the
corresponding non-plug reference; risk metrics are reported as \textit{relative} changes,
and accuracy metrics as \textit{absolute} percentage-point changes. Detailed per-setting
LLM and VLM results are reported in Appendix
Tables~\ref{tab:llm_b0_ablation} and~\ref{tab:vlm_b0_ablation}. On LLMs,
$R_{\mathrm{spec}}$ alone substantially reduces the two risk measures, while the
larger downstream worst-class gain appears after adding $R_{\mathrm{stab}}$. On
VLMs, both variants reduce the risk measures under PGD evaluation, with the
composite objective giving the stronger risk reduction. These results show the intended functional separation: 
$R_{\mathrm{spec}}$ compresses the vulnerable error-flow structure, while 
$R_{\mathrm{stab}}$ improves the reliability of its transfer to evaluation accuracy.

% Overall, these experiments validate the intended design of our method. The
% calibrated buffer gives a stable risk definition, the plug-in consistently
% reduces margin-vulnerable error flow, and the ablations show that the spectral
% and stability terms play complementary roles. Together, these results support
% our central claim: the proposed design provides an effective and practical way
% to improve worst-class robustness while preserving clean utility.
\textbf{Experiment summary.}  Taken together, the experiments support the proposed risk-control chain: the
calibrated buffer defines a stable vulnerable region, the plug-in reduces the
targeted error-flow risks, and the ablations confirm the complementary roles of
$R_{\mathrm{spec}}$ and $R_{\mathrm{stab}}$. This improves worst-class robustness
while largely preserving clean utility.

\section{Related Work}
\label{sec:related}

\subsection{Robustness to Input Perturbations}

Robustness to input perturbations is commonly studied through adversarial training, consistency regularization, and flatness-based objectives. 
Adversarial training improves worst-case robustness by training on perturbed examples
\citep{DBLP:journals/corr/GoodfellowSS14,madry2018towards,zhang2019theoretically}, while SMART, R3F, adversarial weight perturbation, and sharpness-aware training stabilize predictions under input, embedding, or parameter perturbations
\citep{jiang2020smart,aghajanyan2021better,NEURIPS2020_1ef91c21,foret2021sharpness}. 
These objectives are effective, but they usually aggregate robustness across samples or losses.
Recent work extends these ideas to LLMs and VLMs. 
For LLMs, prompt perturbations, distractors, typos, and instruction edits motivate perturbation-aware fine-tuning and prompt-consistency learning
\citep{qiang2024prompt,gupta2024evaluating,agrawal2025enhancing}. 
For VLMs, robust adaptation often perturbs visual inputs while tuning lightweight components such as adapters, LoRA modules, or vision encoders
\citep{pmlr-v235-schlarmann24a,ghiasvand2025fewshotadversariallowrankfinetuning,oh2024towards}. 
A related line studies class-wise and worst-class robustness. 
Prior work shows that adversarial training can create large robustness disparities across classes
\citep{xu2021robust,tian2021analysis}. 
BAT, CFA, and WAT reduce such disparities through class balancing, class-specific adversarial configurations, or direct worst-class optimization
\citep{sun2023improving,wei2023classwise,10.1609/aaai.v37i12.26749}. 
Confusional spectral regularization further regularizes hard adversarial confusion for worst-class robust fairness
\citep{jin2025enhancing}. 
FragileFlow shares the weakest-class motivation, but controls a margin-aware probability-flow matrix in LLM/VLM adaptation, capturing where probability mass moves before a prediction fails.

\subsection{PAC-Bayes for Neural Networks}

% PAC-Bayes theory bounds the population risk of randomized predictors by combining empirical risk
% with a posterior--prior complexity term \citep{mcallester1999pac,catoni2007pac}. For neural
% networks, it has been used to analyze generalization through flatness, compression, spectral
% complexity, and parameter perturbations
% \citep{dziugaite2017computing,neyshabur2018pac,arora2018stronger}.

% Recent PAC-Bayes analyses further study modern large models and robust generalization.
% Compression-based bounds suggest that pretrained or adapted LLMs can have much smaller effective
% complexity than their raw parameter count
% \citep{lotfi2024nonvacuous,li2025sr}. Robust PAC-Bayes objectives connect bound minimization to
% regularization of the robust loss geometry \citep{wang2023improving}. Closest in spirit,
% principal-eigenvalue regularization relates the spectral structure of a confusion matrix to
% worst-class certified robustness in smoothed classifiers \citep{jin2025principal}.

% Our analysis follows this spectral viewpoint but targets verbalizer-based LLM/VLM adaptation under
% practical perturbations. We do not optimize the full PAC-Bayes bound directly; instead, the theory
% identifies a controllable empirical object: the spectral norm of a margin-aware error-flow matrix.
% Under logit stability, controlling this object gives a conservative route from empirical spectral
% control to deterministic worst-class robustness.

PAC-Bayes theory is a classical framework for connecting empirical performance with
population risk through a posterior--prior complexity term
\citep{mcallester1999pac,catoni2007pac}. 
For neural networks, it has been used to analyze generalization through flatness,
compression, margin-based complexity, and parameter perturbations
\citep{dziugaite2017computing,neyshabur2018pac,arora2018stronger}.
Recent work has gradually extended this framework to larger models and robust learning.
Compression-based analyses suggest that pretrained or adapted LLMs can have much smaller
effective complexity than their raw parameter count
\citep{lotfi2024nonvacuous,li2026promptperturbationsbreakgeneration}. 
PAC-driven fine-tuning further uses posterior perturbations to guide pretrained language
model adaptation \citep{liu2023pactuning}. 
Robust PAC-Bayes analyses connect bound minimization to adversarial or certified robustness
\citep{wang2023improving,jin2025enhancing}.
Our work is inspired by this line of analysis, but uses PAC-Bayes for a different purpose.
Rather than bounding standard generalization error, compression behavior, fine-tuning
performance, or average robust risk, we analyze the margin-aware error-flow matrix induced
by finite-option LLM/VLM adaptation under perturbation. 
This lets us move beyond a purely empirical regularizer: the same object optimized by
FragileFlow also appears in a PAC-Bayes control route for vulnerable worst-class risk.
% Under logit stability, this route further connects the posterior risk to deterministic
% worst-class robustness at test time.
\section{Conclusion}
\label{sec:conclusion}

In this paper, we identified a problem that is often missed by average robustness evaluation: under perturbation, some classes can become fragile by consistently leaking probability mass toward recurring wrong competitors. We introduced \emph{FragileFlow} to capture and reduce this vulnerable error flow, and provided a PAC-Bayes control route to connet the empirical regularized object to worst-class behavior at the population level. Experiments on LLM and VLM adaptation validate this view.
More broadly, our results suggest that robustness should be studied not only through final prediction errors, but also through the structure that emerges before those errors occur. The probability-flow view gives a natural way to inspect how fragility concentrates across classes and points toward more interpretable robustness analysis. An immediate next step, which we are currently pursuing, is to understand how this form of regularization acts inside the model, including whether it suppresses fragile representation directions, changes internal error-flow pathways, or extends from finite-option prediction to generated tokens and semantic decision states.

\begin{ack}
This work is partially funded by the European Union (under grant agreement ID 101212818). Views and opinions expressed are however those of the author(s) only and do not necessarily reflect those of the European Union or European Health and Digital Executive Agency (HADEA). Neither the European Union nor the granting authority can be held responsible for them. This work is partially supported by Innovate UK through AI-PASSPORT under Grant 10126404. This work was awarded a grant by the AI Security Institute (AISI) via the Alignment Project (Rare-Event Estimation in Large Language Models via Subset Simulation) and funded by EPSRC. Yi's contribution is partially supported through the Royal Society international exchanges programme and in part by the Engineering and Physical Sciences Research Council, through funding from RAi UK [EP/Y009800/1].
\end{ack}

% \clearpage
\bibliographystyle{plain}
\bibliography{refs}
%%%%%%%%%%%%%%%%%%%%%%%%%%%%%%%%%%%%%%%%%%%%%%%%%%%%%%%%%%%%
\clearpage
\appendix
% Appendix (proofs and additional details).
% NOTE: This file is included after \appendix in main.tex.

\section{Proofs and Additional Details for Section~\ref{sec:methodology}}
\label{app:sec_prelim_proofs}

\subsection{Proof of margin-stability}
\label{app:proof_margin_buffer}
\begin{proof}
Let $y\in[K]$ be the true class and define $c(x):=\max_{k\neq y} s_\theta(x,k)$ and $c(x'):=\max_{k\neq y} s_\theta(x',k)$.
By definition, $\Delta_\theta(x,y)=s_\theta(x,y)-c(x)$ and $\Delta_\theta(x',y)=s_\theta(x',y)-c(x')$.
Assume
\[
\max_{k\in[K]}|s_\theta(x,k)-s_\theta(x',k)|\le \varepsilon.
\]
Then $|s_\theta(x,y)-s_\theta(x',y)|\le \varepsilon$.
Moreover, for every $k\neq y$, we have $s_\theta(x',k)\le s_\theta(x,k)+\varepsilon$, so taking the maximum over $k\neq y$ yields $c(x')\le c(x)+\varepsilon$.
Hence
\[
\Delta_\theta(x',y)
=
s_\theta(x',y)-c(x')
\ge
(s_\theta(x,y)-\varepsilon) - (c(x)+\varepsilon)
=
\Delta_\theta(x,y)-2\varepsilon,
\]
.
% If $\Delta_\theta(x,y)\ge \gamma$ and $\varepsilon\le \gamma/2$, then $\Delta_\theta(x',y)\ge 0$, implying $\hat y^{\mathrm{det}}_\theta(x')=y$.
If $\Delta_\theta(x,y)> \gamma$ and $\varepsilon< \gamma/2$, then
$\Delta_\theta(x',y)>0$, and hence the true class is the unique maximizer
among the verbalizer classes. Therefore
$\hat y^{\mathrm{det}}_\theta(x')=y$.
% The non-strict version, $\Delta_\theta(x,y)\ge \gamma$ and
% $\varepsilon\le \gamma/2$, only guarantees $\Delta_\theta(x',y)\ge 0$.
% In that boundary case, prediction preservation additionally depends on the
% tie-breaking rule.
\end{proof}

This also lays the groundwork for the subsequent deterministic bridge. A later proposition demonstrates that when the logits of the posterior samples are sufficiently close to those of the mean model, the posterior vulnerable risk can be linked to the deterministic worst-class risk. This section clarifies in advance, under a fixed input perturbation, that the margin buffer can absorb a limited amount of logit deviation.

\subsection{Proof of spectral norm control}
\label{app:proof_l1_spec}
\begin{proof}
Recall $\|A\|_1=\max_{j}\sum_i |A_{ij}|$ and $\|A\|_2=\max_{\|u\|_2=1}\|Au\|_2$.
Let $e_j$ be the $j$-th standard basis vector.
Then
\[
\|A\|_1
=
\max_j \|Ae_j\|_1
\le
\max_j \sqrt{K}\,\|Ae_j\|_2
\le
\sqrt{K}\,\|A\|_2,
\]
where we used $\|z\|_1\le \sqrt{K}\|z\|_2$ for $z\in\mathbb{R}^K$.
\end{proof}

\label{app:sec_pacbayes_proofs}

\subsection{Proof of Theorem~\ref{thm:pacbayes_spectral}}
\label{app:proof_pacbayes_spectral}
\begin{proof}
We condition throughout on the realized class counts $\{m_j\}_{j=1}^{K}$ and on the perturbation
protocol that induces $D'$ and $S'$. In particular, for this PAC-Bayes statement, the perturbed sample is
treated as fixed with respect to the posterior $Q$ being evaluated. This ensures that the quantities below
are ordinary bounded statistics indexed by the trainable-coordinate random variable $w$.

\medskip
\noindent\textbf{Step 1: Spectral norm as a bilinear form.}
For any matrix $A\in\mathbb{R}^{K\times K}$,
\[
\|A\|_2
=
\sup_{\|u\|_2=1,\|v\|_2=1}
\left|u^\top A v\right|.
\]
We apply this to
\[
A_Q
:=
\bar M^{Q}_{D',\gamma}
-
\bar M^{Q}_{S',\gamma}.
\]

\medskip
\noindent\textbf{Step 2: Stratified bilinear statistic for fixed $u,v$.}
Fix unit vectors $u,v\in\mathbb{R}^{K}$.
For each class $j\in[K]$, define
\[
\ell_{u,j}(w;x')
:=
\sum_{i\neq j}
u_i\,
g_{\gamma,\kappa}^{\theta(w)}(x',j)\,
p_{\theta(w)}(i\mid x').
\]
Let $D'_j$ denote the class-$j$ conditional distribution, and write the class-$j$ sample in $S'$ as
$\{x'_{j,r}\}_{r=1}^{m_j}$.
For fixed $w$, define the stratified bilinear generalization gap
\[
Z_{u,v}(w;S')
:=
\sum_{j=1}^{K}
v_j
\left(
\mathbb{E}_{x'\sim D'_j}\ell_{u,j}(w;x')
-
\frac{1}{m_j}\sum_{r=1}^{m_j}\ell_{u,j}(w;x'_{j,r})
\right).
\]
By expanding the bilinear form column by column, we have
\[
u^\top
\big(
\bar M^{Q}_{D',\gamma}
-
\bar M^{Q}_{S',\gamma}
\big)
v
=
\mathbb{E}_{\widetilde w\sim Q}
\left[
Z_{u,v}(\widetilde w;S')
\right].
\]

\medskip
\noindent\textbf{Step 3: Boundedness.}
For fixed $j$, define
\[
a_i
:=
g_{\gamma,\kappa}^{\theta(w)}(x',j)\,
p_{\theta(w)}(i\mid x'),
\qquad i\neq j.
\]
Then $a_i\ge 0$ and
\[
\sum_{i\neq j}a_i
\le
g_{\gamma,\kappa}^{\theta(w)}(x',j)
\sum_{i\neq j}p_{\theta(w)}(i\mid x')
\le 1.
\]
Since $\|u\|_2=1$ implies $\|u\|_\infty\le 1$, we obtain
\[
\left|
\ell_{u,j}(w;x')
\right|
=
\left|
\sum_{i\neq j}u_i a_i
\right|
\le
\|u\|_\infty\sum_{i\neq j}a_i
\le 1.
\]
Thus each class-conditional term is bounded in $[-1,1]$.

\medskip
\noindent\textbf{Step 4: PAC-Bayes bound for the stratified bilinear gap.}
For fixed $u,v,w$, the random variables in $Z_{u,v}(w;S')$ are independent across the
stratified class samples, conditional on the class counts.
Since each $\ell_{u,j}(w;x')\in[-1,1]$, Hoeffding's lemma gives, for any $\lambda>0$,
\[
\mathbb{E}_{S'}
\exp\left(
\lambda Z_{u,v}(w;S')
\right)
\le
\exp\left(
\frac{\lambda^2}{2}
\sum_{j=1}^{K}\frac{v_j^2}{m_j}
\right),
\]
and the same bound holds for $-Z_{u,v}(w;S')$.
By the standard PAC-Bayes change-of-measure argument, with probability at least
$1-\delta_0$ over $S'$, simultaneously for all posteriors $Q$,
\[
\left|
\mathbb{E}_{\widetilde w\sim Q}
Z_{u,v}(\widetilde w;S')
\right|
\le
\sqrt{
2\left(
\sum_{j=1}^{K}\frac{v_j^2}{m_j}
\right)
\left(
\mathrm{KL}(Q\|P)+\ln\frac{2}{\delta_0}
\right)
}.
\]
Since $m_j\ge m_{\min}$ and $\|v\|_2=1$,
\[
\sum_{j=1}^{K}\frac{v_j^2}{m_j}
\le
\frac{1}{m_{\min}}.
\]
Therefore, for fixed $u,v$, with probability at least $1-\delta_0$, simultaneously for all $Q$,
\[
\left|
u^\top
\big(
\bar M^{Q}_{D',\gamma}
-
\bar M^{Q}_{S',\gamma}
\big)
v
\right|
\le
\sqrt{
\frac{
2\left(
\mathrm{KL}(Q\|P)+\ln\frac{2}{\delta_0}
\right)
}{
m_{\min}
}
}.
\]

\medskip
\noindent\textbf{Step 5: Epsilon-net reduction.}
Let $\mathcal{N}_{1/4}$ be a $1/4$-net of the Euclidean unit sphere in $\mathbb{R}^{K}$.
We use the standard bound
\[
|\mathcal{N}_{1/4}|\le 9^{K},
\]
and the standard net-to-spectrum reduction
\[
\|A\|_2
\le
2
\max_{\hat u,\hat v\in\mathcal{N}_{1/4}}
\left|
\hat u^\top A\hat v
\right|.
\]
Apply the fixed-$(u,v)$ bound to all pairs
$(\hat u,\hat v)\in\mathcal{N}_{1/4}\times\mathcal{N}_{1/4}$ with
\[
\delta_0
:=
\frac{\delta}{|\mathcal{N}_{1/4}|^2}.
\]
By a union bound, with probability at least $1-\delta$, simultaneously for all posteriors $Q$ and all net points,
\[
\left|
\hat u^\top
\big(
\bar M^{Q}_{D',\gamma}
-
\bar M^{Q}_{S',\gamma}
\big)
\hat v
\right|
\le
\sqrt{
\frac{
2\left(
\mathrm{KL}(Q\|P)
+
\ln\frac{2|\mathcal{N}_{1/4}|^2}{\delta}
\right)
}{
m_{\min}
}
}.
\]
Using $|\mathcal{N}_{1/4}|\le 9^K$, we get
\[
\ln\frac{2|\mathcal{N}_{1/4}|^2}{\delta}
\le
2K\ln 9+\ln\frac{2}{\delta}.
\]
Thus,
\[
\left\|
\bar M^{Q}_{D',\gamma}
-
\bar M^{Q}_{S',\gamma}
\right\|_2
\le
2
\sqrt{
\frac{
2\left(
\mathrm{KL}(Q\|P)+2K\ln 9+\ln\frac{2}{\delta}
\right)
}{
m_{\min}
}
}.
\]

\medskip
\noindent\textbf{Step 6: From spectral deviation to vulnerable worst-class risk.}
By the norm conversion in Appendix~\ref{app:proof_l1_spec},
\[
\mathrm{VWR}_\gamma(Q;D')
=
\left\|
\bar M^{Q}_{D',\gamma}
\right\|_1
\le
\sqrt K
\left\|
\bar M^{Q}_{D',\gamma}
\right\|_2.
\]
Using the triangle inequality,
\[
\left\|
\bar M^{Q}_{D',\gamma}
\right\|_2
\le
\left\|
\bar M^{Q}_{S',\gamma}
\right\|_2
+
\left\|
\bar M^{Q}_{D',\gamma}
-
\bar M^{Q}_{S',\gamma}
\right\|_2.
\]
Therefore,
\[
\mathrm{VWR}_\gamma(Q;D')
\le
\sqrt K\,\mathrm{VSR}_\gamma(Q;S')
+
2\sqrt{
\frac{
2K\left(
\mathrm{KL}(Q\|P)+2K\ln 9+\ln\frac{2}{\delta}
\right)
}{
m_{\min}
}
}.
\]
This completes the proof.
\end{proof}

\subsection{Case-wise view of the stability event}
\label{app:stability_event_cases}

Let
\[
\Delta_\mu:=\Delta_{\theta(\mu)}(x',y),
\qquad
\widetilde{\Delta}:=\Delta_{\theta(\tilde w)}(x',y),
\]
and define the stability event
\[
\mathcal{E}_{\mathrm{stab}}
:=
\left\{
\Xi_Q(\mu,\tilde w)\le \gamma/2
\right\}.
\]
On $\mathcal{E}_{\mathrm{stab}}$, every verbalizer score changes by at most
$\gamma/2$. Since the margin is the difference between the true-class score
and the largest wrong-class score, the margin can change by at most $\gamma$:
\[
|\widetilde{\Delta}-\Delta_\mu|\le \gamma .
\]
Equivalently,
\[
\Delta_\mu-\gamma
\le
\widetilde{\Delta}
\le
\Delta_\mu+\gamma .
\]
We now spell out the implications case by case.

\paragraph{Case 1: $\Delta_\mu\le 0$.}
The deterministic mean model makes an error. On
$\mathcal{E}_{\mathrm{stab}}$, the upper bound gives
\[
\widetilde{\Delta}
\le
\Delta_\mu+\gamma
\le
\gamma .
\]
Thus the posterior sample lies in the $\gamma$-vulnerable region
$\{\widetilde{\Delta}\le\gamma\}$. This is the only direction needed for
Proposition~\ref{prop:deterministic_bridge}: a deterministic error of the
mean model is covered by the posterior vulnerable event, except when
$\mathcal{E}_{\mathrm{stab}}$ fails.

\paragraph{Case 2: $0<\Delta_\mu\le\gamma$.}
The mean model is correct but already near the decision boundary. On
$\mathcal{E}_{\mathrm{stab}}$,
\[
\Delta_\mu-\gamma
\le
\widetilde{\Delta}
\le
\Delta_\mu+\gamma .
\]
Since $0<\Delta_\mu\le\gamma$, this implies
\[
-\gamma
<
\Delta_\mu-\gamma
\le
\widetilde{\Delta}
\le
\Delta_\mu+\gamma
\le
2\gamma .
\]
Hence the posterior sample may either remain correct, flip, or enter the
$\gamma$-vulnerable region depending on the realized posterior shift. Counting
such mean-correct but noise-sensitive examples in the posterior vulnerable
risk is \textbf{conservative} and does not invalidate the upper bound on deterministic
worst-class risk.

\paragraph{Case 3: $\Delta_\mu>\gamma$.}
The mean model has a positive safety margin. On
$\mathcal{E}_{\mathrm{stab}}$, the lower bound gives
\[
\widetilde{\Delta}
\ge
\Delta_\mu-\gamma
>
0 ,
\]
so posterior sampling cannot flip the prediction under the stable event.
There are two subcases. 

If $\Delta_\mu>2\gamma$, then
\[
\widetilde{\Delta}
\ge
\Delta_\mu-\gamma
>
\gamma ,
\]
so the stable posterior sample stays outside the vulnerable region.

If $\gamma<\Delta_\mu\le 2\gamma$, then
\[
0
<
\Delta_\mu-\gamma
\le
\gamma ,
\]
so a stable posterior sample may still enter the vulnerable region
$\{\widetilde{\Delta}\le\gamma\}$, reflecting local sensitivity around the
mean model. 
% Therefore, this is the only situation that
% $\mathcal{E}_{\mathrm{stab}}$ fails. 
Therefore, this conservative counting is harmless for the upper bound, while genuine large posterior-induced margin shifts are accounted for by the failure probability $\rho$.

This unstable posterior-induced margin
shifts are precisely the events accounted for by the failure probability
$\rho$, and they motivate the stability regularizer used in the main method.

\subsection{Risk-level Gibbs-to-deterministic bridge}
\label{app:bridge_gibbs_det}
\label{app:deterministic_bridge}

For the bridge proof, define the posterior margin-failure risk
\begin{equation}
R_{\mathrm{mf},\gamma}(Q;D')
:=
\max_{j\in[K]}
\mathbb{E}_{x'\sim D'_j}
\left[
\Pr_{\widetilde w\sim Q}
\left(
\Delta_{\theta(\widetilde w)}(x',j)\le \gamma
\right)
\right],
\end{equation}
where $D'_j$ denotes the class-$j$ conditional perturbed distribution.

\begin{lemma}[Risk-level bridge under logit stability]
\label{lem:gibbs_det_bridge}
\label{lem:risk_level_bridge}
Fix $\gamma\ge 0$ and let $Q$ be any posterior over trainable coordinates with finite mean $\mu=\mathbb{E}_{\tilde w\sim Q}[\tilde w]$.
Assume
\[
\Pr_{\tilde w\sim Q}
\big(\Xi_Q(\mu,\tilde w)\le \gamma/2\big)
\ge 1-\rho.
\]
Then
\[
\mathrm{WCR}^{\mathrm{det}}(\theta(\mu);D')
\le
R_{\mathrm{mf},\gamma}(Q;D')
\;+\;
\rho.
\]
\end{lemma}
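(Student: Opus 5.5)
The plan is to work class by class and reduce the deterministic error event of the mean model to the posterior margin-failure event, paying $\rho$ only on the unstable event. Fix a class $j\in[K]$ and a perturbed input $x'$ in the support of $D'_j$. The deterministic predictor errs on $x'$ only if some wrong option scores at least as high as option $j$, so
\[
\mathbf{1}\{\hat y^{\mathrm{det}}_{\theta(\mu)}(x')\neq j\}\le \mathbf{1}\{\Delta_{\theta(\mu)}(x',j)\le 0\}.
\]
If ties are broken in favour of $j$, the inequality is still valid, just loose. The pointwise claim I will prove is
\[
\mathbf{1}\{\Delta_{\theta(\mu)}(x',j)\le 0\}
\le
\Pr_{\tilde w\sim Q}\big(\Delta_{\theta(\tilde w)}(x',j)\le\gamma\big)+\rho .
\]

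I will prove it by splitting according to whether the mean model errs on $x'$. If $\Delta_{\theta(\mu)}(x',j)>0$, the left side is zero and there is nothing to show. Otherwise, consider the stability event $\mathcal{E}_{\mathrm{stab}}=\{\Xi_Q(\mu,\tilde w)\le\gamma/2\}$. Because $\Xi_Q$ is a supremum over $\mathrm{supp}(D')$ and $x'$ lies in that support, on this event every score of $\theta(\tilde w)$ at $x'$ is within $\gamma/2$ of the corresponding score of $\theta(\mu)$. Applying the argument of Appendix~\ref{app:proof_margin_buffer} with $\varepsilon=\gamma/2$, and with the two models in place of the two inputs, gives $|\Delta_{\theta(\tilde w)}(x',j)-\Delta_{\theta(\mu)}(x',j)|\le\gamma$. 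This is exactly Case~1 of Appendix~\ref{app:stability_event_cases}, and it yields $\Delta_{\theta(\tilde w)}(x',j)\le\gamma$. Hence $\mathcal{E}_{\mathrm{stab}}$ is contained in the margin-failure event, and
\[
\Pr_Q(\Delta_{\theta(\tilde w)}(x',j)\le\gamma)\ge \Pr_Q(\mathcal{E}_{\mathrm{stab}})\ge 1-\rho,
\]
so the right side is at least $1$.

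Next I take expectation over $x'\sim D'_j$. This gives
\[
\Pr_{D'_j}\big(\hat y^{\mathrm{det}}_{\theta(\mu)}(x')\neq j\big)\le \mathbb{E}_{x'\sim D'_j}\Pr_Q\big(\Delta_{\theta(\tilde w)}(x',j)\le\gamma\big)+\rho .
\]
The first term on the right is at most $R_{\mathrm{mf},\gamma}(Q;D')$ by definition of that maximum. Taking $\max_j$ on the left then gives $\mathrm{WCR}^{\mathrm{det}}(\theta(\mu);D')\le R_{\mathrm{mf},\gamma}(Q;D')+\rho$. Note that $\rho$ is added only once and does not depend on $j$, because the stability hypothesis is uniform over the support.

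The main obstacle is measurability: the map $(x',\tilde w)\mapsto \mathbf{1}\{\Delta_{\theta(\tilde w)}(x',j)\le\gamma\}$ must be jointly measurable so that the inner probability is a measurable function of $x'$ and the expectation makes sense. I would assume this, as the paper implicitly does, for instance from continuity of the scores in $w$ and measurability in $x'$. The remaining ingredient is the uniform-in-$x'$ use of $\Xi_Q$. It matters because it lets a single stable event, chosen independently of $x'$, serve for every input simultaneously. Given that, the pointwise bound holds on the whole support and the averaging step is routine.
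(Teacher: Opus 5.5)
Your proof is correct and follows essentially the same route as the paper. The paper packages your case split as the pointwise bound $\mathbf{1}(E)\le \mathbf{1}(F(\tilde w))+\mathbf{1}(S(\tilde w)^c)$, using the swapped margin-stability estimate $\Delta_{\theta(\tilde w)}(x',y)\le\Delta_{\theta(\mu)}(x',y)+\gamma$ on the stability event, and then averages over $\tilde w\sim Q$ and $x'\sim D'_j$ and maximizes over $j$; your remarks on tie-breaking, support membership and measurability are harmless details the paper leaves implicit.
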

\begin{proof}
Fix $(x',y)$ and define
\[
E
:=
\{\hat y^{\mathrm{det}}_{\theta(\mu)}(x')\neq y\},
\qquad
F(\tilde w)
:=
\{\Delta_{\theta(\tilde w)}(x',y)\le \gamma\},
\]
and the stability event
\[
S(\tilde w)
:=
\{\Xi_Q(\mu,\tilde w)\le \gamma/2\}.
\]
If $E$ occurs, then $\Delta_{\theta(\mu)}(x',y)\le 0$.
On $S(\tilde w)$, the score perturbation between $\theta(\mu)$ and $\theta(\tilde w)$ is bounded by $\gamma/2$ for every class.
By the margin-stability argument in Appendix~\ref{app:proof_margin_buffer},
with the two score vectors swapped, we obtain
\[
\Delta_{\theta(\tilde w)}(x',y)
\le
\Delta_{\theta(\mu)}(x',y)+\gamma
\le \gamma.
\]
Hence $E\cap S(\tilde w)\subseteq F(\tilde w)$, so pointwise
\[
\mathbf{1}(E)
\le
\mathbf{1}(F(\tilde w))
+
\mathbf{1}(S(\tilde w)^c).
\]
Taking expectation over $\tilde w\sim Q$ gives
\[
\mathbf{1}(E)
\le
\mathbb{E}_{\tilde w\sim Q}
\big[
\mathbf{1}\{\Delta_{\theta(\tilde w)}(x',y)\le \gamma\}
\big]
+
\rho.
\]
Now take conditional expectation over $x'\sim D'_j$ for each class $j$, and then maximize over $j\in[K]$.
This yields the stated risk-level bridge.
No entrywise matrix domination is claimed.
\end{proof}

\subsection{Margin failure is controlled by vulnerable mass}
\label{app:proof_margin_failure_to_vwr}

We now prove the second part of the deterministic-risk bridge used in the
main text.

\begin{lemma}[Margin failure controlled by margin-aware vulnerable mass]
\label{lem:margin_failure_to_vwr}
Fix $\gamma\ge 0$. Suppose the gate satisfies
\[
g_{\gamma,\kappa}^{\theta}(x',j)\ge \eta
\quad
\text{whenever}
\quad
\Delta_{\theta}(x',j)\le \gamma
\]
for some $\eta>0$. Then
\[
R_{\mathrm{mf},\gamma}(Q;D')
\le
\eta^{-1}(1+e^\gamma)\,
\mathrm{VWR}_\gamma(Q;D').
\]
For the sigmoid gate
$g_{\gamma,\kappa}^{\theta}(x',j)
=\sigma((\gamma-\Delta_\theta(x',j))/\kappa)$, one may take
$\eta=1/2$.
\end{lemma}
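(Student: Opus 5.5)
The plan is to prove a pointwise inequality for each fixed posterior sample $\widetilde w$ and perturbed input $x'$ with true option $j$. This inequality bounds the margin-failure indicator by a multiple of the gated off-option mass. We then average it over $\widetilde w\sim Q$ and $x'\sim D'_j$, and finally maximize over $j$. Since $\mathrm{VWR}_\gamma(Q;D')=\|\bar M^{Q}_{D',\gamma}\|_1$ is a maximum column sum of a nonnegative matrix, it is the natural target.

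\textbf{Step 1 (margin event implies off-option mass).} Fix $\theta=\theta(\widetilde w)$ and $(x',j)$ with $\Delta_\theta(x',j)\le\gamma$. Let $k^\ast\neq j$ attain $\max_{k\neq j}s_\theta(x',k)$. Then
\[
p_\theta(k^\ast\mid x')/p_\theta(j\mid x')=e^{-\Delta_\theta(x',j)}\ge e^{-\gamma}.
\]
Because $p_\theta(j\mid x')+p_\theta(k^\ast\mid x')\le 1$, this gives $p_\theta(j\mid x')\le (1+e^{-\gamma})^{-1}=e^\gamma/(1+e^\gamma)$. Hence
\[
\sum_{i\neq j}p_\theta(i\mid x')=1-p_\theta(j\mid x')\ge (1+e^\gamma)^{-1}.
\]

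\textbf{Step 2 (insert the gate).} On the same event, the hypothesis gives $g^\theta_{\gamma,\kappa}(x',j)\ge\eta$. Combining this with Step 1 yields the pointwise bound
\[
\mathbf{1}\{\Delta_\theta(x',j)\le\gamma\}\le \eta^{-1}(1+e^\gamma)\,g^\theta_{\gamma,\kappa}(x',j)\sum_{i\neq j}p_\theta(i\mid x').
\]
Off the event, the left side is $0$ and the right side is nonnegative, so the bound holds everywhere.

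\textbf{Step 3 (average and identify the column sum).} Take the expectation over $\widetilde w\sim Q$ and $x'\sim D'_j$. Everything is bounded and nonnegative, so Fubini lets us exchange the order of expectation. The left side becomes $\mathbb{E}_{x'\sim D'_j}\Pr_{\widetilde w\sim Q}(\Delta_{\theta(\widetilde w)}(x',j)\le\gamma)$. The right side becomes $\eta^{-1}(1+e^\gamma)\sum_{i\neq j}(\bar M^{Q}_{D',\gamma})_{ij}$, using the definition of $M^{\mathcal{D}',\gamma}_\theta$ in Eq.~\ref{eq:M_margin_population} and linearity of the posterior average. All entries are nonnegative, so this sum is exactly the $\ell_1$ norm of column $j$, which is at most $\|\bar M^{Q}_{D',\gamma}\|_1$. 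Taking $\max_j$ on the left gives $R_{\mathrm{mf},\gamma}(Q;D')\le\eta^{-1}(1+e^\gamma)\mathrm{VWR}_\gamma(Q;D')$.

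\textbf{Step 4 (sigmoid gate).} If $\Delta_\theta(x',j)\le\gamma$, the sigmoid argument $(\gamma-\Delta_\theta(x',j))/\kappa$ is nonnegative, so $\sigma(\cdot)\ge 1/2$ and $\eta=1/2$ works.

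The only real obstacle is Step 1: the softmax-ratio argument has to give the clean constant $(1+e^\gamma)^{-1}$. The rest is bookkeeping. In Step 3 one must check that the class-conditional expectation commutes with the posterior average, which is immediate by boundedness.
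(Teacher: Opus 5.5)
Your proof is correct and follows the paper's argument essentially step for step. The paper also bounds $p_\theta(j\mid x')\le(1+e^{-\gamma})^{-1}$ by keeping a single near-competitor in the softmax denominator, inserts the gate lower bound to get the same pointwise indicator inequality, then averages over $\widetilde w\sim Q$ and $x'\sim D'_j$ and maximizes over $j$; your explicit identification of the averaged right-hand side with the $j$-th column sum of $\bar M^{Q}_{D',\gamma}$ and your check that $\eta=1/2$ suffices for the sigmoid gate fill in details the paper leaves implicit.
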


\begin{proof}
Fix a posterior sample $\tilde w\sim Q$, a perturbed example $x'$, and a
class $j$. For brevity write $\theta=\theta(\tilde w)$.
If $\Delta_\theta(x',j)\le \gamma$, then by the definition of the margin
there exists some competitor $i\neq j$ such that
\[
s_\theta(x',i)
\ge
s_\theta(x',j)-\gamma .
\]
Therefore,
\[
\exp(s_\theta(x',i))
\ge
e^{-\gamma}\exp(s_\theta(x',j)).
\]
Using only this competitor in the softmax denominator gives
\[
p_\theta(j\mid x')
=
\frac{\exp(s_\theta(x',j))}
{\sum_{k=1}^K \exp(s_\theta(x',k))}
\le
\frac{\exp(s_\theta(x',j))}
{\exp(s_\theta(x',j))+\exp(s_\theta(x',i))}
\le
\frac{1}{1+e^{-\gamma}}.
\]
Hence
\[
1-p_\theta(j\mid x')
\ge
\frac{1}{1+e^\gamma}.
\]
Moreover, on the same event $\Delta_\theta(x',j)\le\gamma$, the gate
satisfies $g_{\gamma,\kappa}^{\theta}(x',j)\ge\eta$. Thus,
\[
g_{\gamma,\kappa}^{\theta}(x',j)
\bigl(1-p_\theta(j\mid x')\bigr)
\ge
\frac{\eta}{1+e^\gamma}.
\]
Equivalently, pointwise,
\[
\mathbf{1}\{\Delta_\theta(x',j)\le\gamma\}
\le
\eta^{-1}(1+e^\gamma)
g_{\gamma,\kappa}^{\theta}(x',j)
\bigl(1-p_\theta(j\mid x')\bigr).
\]
Taking expectation over $\tilde w\sim Q$ and over the class-conditional
distribution $x'\sim D'_j$, and then maximizing over $j\in[K]$, gives
\[
R_{\mathrm{mf},\gamma}(Q;D')
\le
\eta^{-1}(1+e^\gamma)
\mathrm{VWR}_\gamma(Q;D').
\]
\end{proof}

\paragraph{Combined consequence.}
Combining Lemma~\ref{lem:margin_failure_to_vwr} with the risk-level bridge
in Lemma~\ref{lem:risk_level_bridge} gives
\[
\mathrm{WCR}^{\mathrm{det}}(\theta(\mu);D')
\le
\eta^{-1}(1+e^\gamma)
\mathrm{VWR}_{\gamma}(Q;D')
+
\rho .
\]
% Combining this inequality with Theorem~\ref{thm:pacbayes_spectral} further
% yields, with the same high-probability event as the theorem,
% \[
% \mathrm{WCR}^{\mathrm{det}}(\theta(\mu);D')
% \le
% \eta^{-1}(1+e^\gamma)
% \left[
% \sqrt K
% \left\|
% \bar M_{S',\gamma}^{Q}
% \right\|_2
% +
% 2K
% \sqrt{
% \frac{
% \mathrm{KL}(Q\|P)+2K\ln 9+\ln\frac{2K}{\delta}
% }{
% 2m_{\min}
% }
% }
% \right]
% +\rho .
% \]
Combining this inequality with Theorem~\ref{thm:pacbayes_spectral} further yields, with the same high-probability event as the theorem,
\[
\mathrm{WCR}^{\mathrm{det}}(\theta(\mu);D')
\le
\eta^{-1}(1+e^\gamma)
\left[
\sqrt{K}\,
\mathrm{VSR}_\gamma(Q;S')
+
2
\sqrt{
\frac{
2K\left(
\mathrm{KL}(Q\|P)+2K\ln 9+\ln\frac{2}{\delta}
\right)
}{
m_{\min}
}
}
\right]
+\rho .
\]
This is the conservative PAC-Bayes route from the empirical margin-aware
spectral structure to deterministic worst-class risk.

% \section{Additional Details for Section~\ref{sec:methodology}}
% \label{app:sec_method_details}

\subsection{Training algorithm (classification plug-in; experimental LoRA instantiation)}
\label{app:sodu_algorithm}
\begin{algorithm}[t]
\caption{Plug-in spectral safety control for verbalizer classification}
\label{alg:plugin_training}
\begin{algorithmic}[1]
\REQUIRE Dataset $S$, perturbation mechanism $\mathcal{U}(\cdot)$, safety buffer $\gamma$, gate temperature $\kappa$, base objective $\mathcal{L}_{\mathrm{base}}$, spectral weight $\alpha$, stability weight $\beta$, power-iteration steps $T_{\mathrm{pi}}$, refresh interval $N$, numerical constant $\varepsilon_{\mathrm{spec}}$, and local-coordinate perturbation scale $\sigma_Q$.
\STATE Initialize trainable coordinates $\phi$ and a unit vector $v\in\mathbb{R}^{K}$.
\FOR{each training step $t$}
    \STATE Sample a mini-batch $B=\{(x_q,y_q)\}$ from $S$.
    \STATE Construct perturbed inputs $x'_q\in\mathcal{U}(x_q)$ and form $B'=\{(x'_q,y_q)\}$.
    \STATE Compute verbalizer probabilities $p_{\theta(\phi)}(\cdot\mid x'_q)$, margins $\Delta_{\theta(\phi)}(x'_q,y_q)$, and gates $g^{\theta(\phi)}_{\gamma,\kappa}(x'_q,y_q)$.
    \STATE Build the differentiable batch matrix $\widetilde M_{\phi}^{\mathrm{batch},\gamma}$ via Eq.~\eqref{eq:soft_M_batch}.
    Columns whose classes are absent from $B$ are masked out for this update; the matrix dimension remains $K\times K$.
    \IF{$t \bmod N = 0$}
        \STATE Update $v$ by $T_{\mathrm{pi}}$ steps of power iteration on
        $(\widetilde M_{\phi}^{\mathrm{batch},\gamma})^\top
        \widetilde M_{\phi}^{\mathrm{batch},\gamma}$, and normalize $v$.
    \ENDIF
    \STATE Treat $v$ as fixed within the current refresh window and estimate
    \[
    \mathcal{R}_{\mathrm{spec}}
    =
    \sqrt{
    v^\top
    (\widetilde M_{\phi}^{\mathrm{batch},\gamma})^\top
    \widetilde M_{\phi}^{\mathrm{batch},\gamma}
    v
    +
    \varepsilon_{\mathrm{spec}}
    } .
    \]
    \IF{$\beta>0$}
        \STATE Sample a temporary local perturbation $u\sim\mathcal{N}(0,\sigma_Q^2 I)$ on the trainable coordinates and set $\widetilde\phi=\phi+u$.
        \STATE Compute the stability penalty $\mathcal{R}_{\mathrm{stab}}$ via Eq.~\eqref{eq:R_stab_def}.
    \ELSE
        \STATE Set $\mathcal{R}_{\mathrm{stab}}=0$.
    \ENDIF
    \STATE Update $\phi$ by minimizing
    \[
    \mathcal{L}_{\mathrm{base}}
    +
    \alpha \mathcal{R}_{\mathrm{spec}}
    +
    \beta \mathcal{R}_{\mathrm{stab}} .
    \]
\ENDFOR
\end{algorithmic}
\end{algorithm}

\subsection{Gradient estimator induced by power iteration}
\label{app:power_iter_grad}
Let $A=\widetilde M^\gamma_\phi$ and define $B=A^\top A$.
Power iteration produces a unit vector $v$ approximating the top eigenvector of $B$, hence $\sigma_{\max}(A)\approx \sqrt{v^\top Bv}$.
If we treat $v$ as fixed within a refresh window, then
\begin{equation}
\widehat{\sigma}(A):=\sqrt{v^\top A^\top Av}
\quad\Rightarrow\quad
\frac{\partial \widehat{\sigma}}{\partial A}
=
\frac{1}{\widehat{\sigma}(A)}\,Avv^\top.
\label{eq:grad_sigma_hat}
\end{equation}

In implementation, we use the stabilized estimator
\[
\widehat\sigma_\varepsilon(A)
=
\sqrt{v^\top A^\top A v+\varepsilon_{\mathrm{spec}}},
\]
with a small $\varepsilon_{\mathrm{spec}}>0$.
The corresponding gradient is
\[
\frac{\partial \widehat\sigma_\varepsilon}{\partial A}
=
\frac{A vv^\top}{\widehat\sigma_\varepsilon(A)}.
\]

\subsection{Safety gate temperature}
\label{app:gate_temp}
The temperature $\kappa$ in Eq.~\eqref{eq:gate_sigmoid_prelim} interpolates between a hard margin indicator and a smooth weighting.
A smaller $\kappa$ makes $g_{\gamma,\kappa}^{\theta}$ closer to $\mathbf{1}\{\Delta\le \gamma\}$, but may increase gradient variance.

\subsection{Local Interpretation of the Joint Stability Regularizer}
\label{app:stab_local_interpretation}

We provide a local interpretation of the stability regularizer in Eq.~\eqref{eq:R_stab_def}. 
For a mini-batch index $q$, let
\[
z_q := s_{\theta(w)}(x_q),
\qquad
z'_{q,u} := s_{\theta(w+u)}(x'_q),
\]
where $u$ denotes a temporary perturbation in the trainable-coordinate space. 
Let
\[
p_q := \mathrm{softmax}(z_q/T),
\qquad
p'_{q,u} := \mathrm{softmax}(z'_{q,u}/T).
\]
The consistency loss compares $p_q$ with $p'_{q,u}$. 
Since the softmax is invariant to adding the same constant to all logits, we consider the centered logit drift
\[
\Delta z_q(u)
:=
\Pi\left(z'_{q,u}-z_q\right),
\qquad
\Pi
:=
I-\frac{1}{K}\mathbf{1}\mathbf{1}^{\top},
\]
where $\mathbf{1}\in\mathbb{R}^K$ is the all-one vector. 
% The projection $\Pi$ removes the shift-invariant logit component.
The projection $\Pi$ removes the common-shift direction, i.e., the component of the logits to which the softmax distribution is invariant.

For small centered logit drift, the softmax KL admits the second-order expansion
\[
\mathrm{KL}(p_q\|p'_{q,u})
=
\frac{1}{2T^2}
\Delta z_q(u)^{\top}
F_q
\Delta z_q(u)
+
O(\|\Delta z_q(u)\|^3),
\]
where
\[
F_q
:=
\mathrm{Diag}(p_q)-p_qp_q^{\top}
\]
is the Fisher matrix of the categorical distribution induced by the clean branch. 
Thus the KL consistency term locally penalizes a Fisher-weighted centered logit drift.

To separate the sources of this drift, assume $\tilde w=w+u$ with 
$u\sim\mathcal{N}(0,\sigma_Q^2I)$ and linearize the logits around $w$:
\[
s_{\theta(w+u)}(x'_q)
\approx
s_{\theta(w)}(x'_q)
+
\nabla_w s_{\theta(w)}(x'_q)u.
\]
Therefore,
\[
\Delta z_q(u)
\approx
a_q+J_qu,
\]
where
\[
a_q
:=
\Pi\left(
s_{\theta(w)}(x'_q)-s_{\theta(w)}(x_q)
\right)
\]
captures the clean-to-perturbed input drift, and
\[
J_q
:=
\Pi\nabla_w s_{\theta(w)}(x'_q)
\]
captures the local sensitivity of the logits to trainable-coordinate perturbations. 
Substituting this decomposition into the quadratic approximation gives
\[
(a_q+J_qu)^\top F_q(a_q+J_qu)
=
a_q^\top F_q a_q
+
2a_q^\top F_qJ_qu
+
u^\top J_q^\top F_qJ_qu.
\]
Taking expectation over the zero-mean Gaussian perturbation eliminates the cross term:
\[
\mathbb{E}_u[2a_q^\top F_qJ_qu]
=
2a_q^\top F_qJ_q\mathbb{E}[u]
=
0.
\]
Moreover, using $\mathbb{E}[uu^\top]=\sigma_Q^2I$, we have
\[
\mathbb{E}_u
\left[
u^\top J_q^\top F_qJ_qu
\right]
=
\sigma_Q^2
\mathrm{Tr}(J_q^\top F_qJ_q).
\]
Hence,
\[
\mathbb{E}_u\,
\mathrm{KL}(p_q\|p'_{q,u})
\approx
\frac{1}{2T^2}
a_q^\top F_q a_q
+
\frac{\sigma_Q^2}{2T^2}
\mathrm{Tr}(J_q^\top F_qJ_q).
\]
The first term penalizes clean-to-perturbed input drift, matching the
consistency principle used in standard robust training. The second term
penalizes the output effect of local perturbations in the trainable
coordinates. This second term is the part directly related to the stability
event used in Proposition~\ref{prop:deterministic_bridge}: it discourages a
posterior sample $\theta(w+u)$ from inducing a large centered logit drift
relative to $\theta(w)$ on the perturbed input.

To see the connection to margin stability, let
$d_q(u):=\Pi(s_{\theta(w+u)}(x'_q)-s_{\theta(w)}(x'_q))$. Since margins are
invariant to common logit shifts, only this centered drift matters. For any
label $y$,
\[
\left|
\Delta_{\theta(w+u)}(x'_q,y)
-
\Delta_{\theta(w)}(x'_q,y)
\right|
\le
2\|d_q(u)\|_{\infty}
\le
2\|d_q(u)\|_2 .
\]
Thus, a posterior-induced margin shift larger than $\gamma$ requires
$\|d_q(u)\|_2>\gamma/2$. Under the local non-degeneracy condition that the
Fisher matrix has eigenvalue at least $\lambda_q>0$ on the relevant centered
subspace, the quadratic term satisfies
\[
d_q(u)^\top F_q d_q(u)
\ge
\lambda_q \|d_q(u)\|_2^2 .
\]
% Consequently, by Markov's inequality, the probability of a large local margin
% shift is controlled by the expected Fisher-weighted drift:
% \[
% \Pr_u
% \!\left(
% \left|
% \Delta_{\theta(w+u)}(x'_q,y)
% -
% \Delta_{\theta(w)}(x'_q,y)
% \right|
% >
% \gamma
% \right)
% \lesssim
% \frac{4}{\lambda_q\gamma^2}
% \mathbb{E}_u
% \!\left[
% d_q(u)^\top F_q d_q(u)
% \right],
% \]
Consequently, by Markov's inequality, the probability of a large local margin shift in absolute value is controlled by the expected Fisher-weighted drift:
\[
\Pr_u\left(
\left|
\Delta_{\theta(w+u)}(x'_q,y)
-
\Delta_{\theta(w)}(x'_q,y)
\right|
>
\gamma
\right)
\lesssim
\frac{4}{\lambda_q\gamma^2}
\mathbb{E}_u
\left[
d_q(u)^\top F_q d_q(u)
\right],
\]
up to the local second-order approximation above. Therefore,
$\mathcal{R}_{\mathrm{stab}}$ should be understood as a practical proxy for
reducing the stability-failure probability $\rho$: it penalizes the input
drift term and, more importantly for the deterministic bridge, the
coordinate-noise sensitivity that can produce large margin shifts. The
PAC-Bayes complexity term controls the size of the posterior in parameter
space, while this Fisher-weighted term controls how such local perturbations
affect the predictive distribution.

\section{Special Cases of the Trainable-Coordinate PAC-Bayes Setup}
\label{app:special_cases}

\subsection{LoRA specialization}
In the LoRA case, the trainable coordinates are exactly the LoRA parameters:
\[
w=\phi_{\mathrm{LoRA}}\in\mathbb{R}^{d_\ell},
\qquad
d_{\mathrm{train}}=d_\ell,
\qquad
\theta(w)=\mathcal{T}_{\mathrm{LoRA}}(\theta_0,w).
\]
Under the Gaussian family
\begin{equation}
P=\mathcal{N}(0,\tau_P^2 I),
\qquad
Q=\mathcal{N}(\mu,\sigma_Q^2 I),
\label{eq:gaussian_PQ}
\end{equation}
the KL term becomes
\begin{equation}
\mathrm{KL}(Q\|P)
=
\frac{\|\mu\|_2^2}{2\tau_P^2}
+
\frac{d_\ell}{2}
\left(
\frac{\sigma_Q^2}{\tau_P^2}
-1
-\ln\frac{\sigma_Q^2}{\tau_P^2}
\right).
\label{eq:kl_gaussian}
\end{equation}
This is the LoRA-dimensional specialization of the Gaussian
$\mathrm{KL}(Q\|P)$ term used in Theorem~\ref{thm:pacbayes_spectral}.
The PAC-Bayes theorem itself is unchanged; only the instantiation of $w$ and $d_{\mathrm{train}}$ changes.

\subsection{Full-parameter fine-tuning}
\label{app:fullparam_pac}
For full fine-tuning, we take
\[
w=\theta-\theta_0\in\mathbb{R}^{d_{\mathrm{full}}},
\qquad
d_{\mathrm{train}}=d_{\mathrm{full}}.
\]
The theorem in Section~\ref{sec:main_theorem} again applies in exactly the same form.
The main difference is quantitative: the complexity term is typically larger because the Gaussian KL now scales with $d_{\mathrm{full}}$ rather than $d_\ell$.
The routes below are only \emph{sufficient} ways to instantiate the stability condition~\ref{prop:deterministic_bridge}; they are not automatic guarantees.

\paragraph{Setup.}
Let $\mu\in\mathbb{R}^{d_{\mathrm{full}}}$ denote the full fine-tuning displacement, so that the posterior mean corresponds to parameters $\theta(\mu)=\theta_0+\mu$.
For $u\sim\mathcal{N}(0,\sigma_Q^2 I_{d_{\mathrm{full}}})$, the sampled parameter vector is $\theta(\mu+u)$.
We seek sufficient conditions implying
\[
\Pr_{u\sim\mathcal{N}(0,\sigma_Q^2 I_{d_{\mathrm{full}}})}
\big(
\Xi(\mu,u)\le \gamma/2
\big)
\ge 1-\rho.
\]

\subsubsection*{Route 1: A Lipschitz-type bound (explicit but conservative)}
Suppose one can bound the score perturbation by
\[
\Xi(\mu,u)\le L_{\mu}\|u\|_2
\]
for some constant $L_\mu>0$.
For Transformers, such a bound can be obtained conservatively by combining layerwise operator-norm perturbation bounds with bounded hidden representations.
For example, if a pointwise nonlinearity $\psi$ is $L_\psi$-Lipschitz and the relevant linear maps admit perturbation bounds of the form
\[
\|(W+\Delta W)h-Wh\|_2\le \|\Delta W\|_2\,\|h\|_2,
\]
then repeated application through the network yields a global constant $L_\mu$.
Combining this with Gaussian norm concentration,
\[
\Pr\Big(
\|u\|_2 \le \sigma_Q\big(\sqrt{d_{\mathrm{full}}}+\sqrt{2\ln(1/\rho)}\big)
\Big)\ge 1-\rho,
\]
gives the sufficient condition
\[
L_\mu\,
\sigma_Q\big(\sqrt{d_{\mathrm{full}}}+\sqrt{2\ln(1/\rho)}\big)
\le \gamma/2.
\]

\subsubsection*{Route 2: Local (Jacobian-based) stability certificate}
Alternatively, one can use local first-order sensitivity.
If $s_{\theta(w)}(x',k)$ is differentiable in $w$, then for $u$ in a local neighborhood,
\[
|s_{\theta(\mu+u)}(x',k)-s_{\theta(\mu)}(x',k)|
\le
\|\nabla_w s_{\theta(\mu)}(x',k)\|_2\,\|u\|_2
+
\frac{H}{2}\|u\|_2^2,
\]
where $H$ controls the local Hessian along the segment $\mu+t u$.
Ignoring or separately controlling the second-order term yields the practical approximation
\[
\Xi(\mu,u)
\lesssim
G_\mu\,\|u\|_2,
\qquad
G_\mu
:=
\sup_{(x',y)\sim D'}\max_{k\in[K]}
\|\nabla_w s_{\theta(\mu)}(x',k)\|_2.
\]
Together with Gaussian norm concentration, a sufficient local condition is
\[
G_\mu\,
\sigma_Q\big(\sqrt{d_{\mathrm{full}}}+\sqrt{2\ln(1/\rho)}\big)
\le \gamma/2.
\]

\paragraph{Remark on complexity.}
Under a fully isotropic Gaussian posterior, the KL term scales with $d_{\mathrm{full}}$.
This is the main reason why the LoRA specialization is often more attractive in PAC-Bayes analyses, even though the theorem itself does not require parameter efficiency.

\section{Discussions and Limitations}
\label{sec:discussions_and_limitations}
\paragraph{Adaptive perturbation generation.}
Our PAC-Bayes analysis conditions on a fixed perturbed sample or a perturbation protocol independent of the learned posterior.
This is appropriate for the LLM text-perturbation setting and for frozen or sample-split perturbation generators.
The VLM PGD training loop is more adaptive: adversarial examples can depend on the current model parameters.
We therefore interpret the VLM results as an empirical stress test of the FragileFlow regularizer rather than as a direct instantiation of the strict PAC-Bayes theorem.
Extending the bound to fully model-dependent adversarial data generation, for example through data-dependent PAC-Bayes or algorithm-dependent perturbation kernels, is a direction for future work.

\paragraph{Trainable-coordinate choice and complexity.}
Our theory is stated over generic trainable coordinates and is not tied to LoRA. We use LoRA in the experiments mainly for resource efficiency and because it matches common adaptation practice for large models. Full-parameter fine-tuning is compatible with the same formal framework and may give the regularizer more freedom to reshape fragile error-flow patterns, but it also increases the PAC-Bayes complexity term and may require stronger control of the adapted parameters. In practice, this suggests a trade-off between adaptation capacity and complexity control; empirical parameter regularization, structured posteriors, or norm-constrained updates are natural ways to keep this trade-off stable. We leave a systematic comparison between LoRA and full-parameter adaptation to future work.

\paragraph{Calibration and tuning budget.}
Our experiments use a fixed calibration protocol for the safety buffer and a limited sensitivity sweep for the plug-in weights, rather than exhaustively tuning these choices for every model--dataset--learner combination. This makes the comparisons more conservative and avoids selecting hyperparameters directly to maximize each reported test metric, but it also means that the reported numbers should not be read as the best achievable performance of FragileFlow. The consistent reductions in vulnerable-flow measures under this restrained tuning protocol support the main theory-facing claim, while larger-scale calibration and task-specific tuning may further improve the robustness--utility trade-off.

\paragraph{Data scale and perturbation strength.}
We evaluate FragileFlow under fixed data budgets and fixed perturbation protocols in order to keep the comparison controlled across models, datasets, and base learners. We do not exhaustively study how the effect changes with larger adaptation sets, different calibration-set sizes, or stronger perturbation budgets. These factors may influence both the estimated vulnerable-flow matrix and the robustness--utility trade-off. A more systematic scaling study over data size and perturbation strength would be useful for understanding when spectral error-flow control is most beneficial.

\paragraph{Broader impacts.}
This work studies a general robustness objective and does not introduce a new deployment domain or user-facing system. Its positive impact is that worst-class-oriented robustness may help reduce concentrated failures in finite-option LLM and VLM applications, especially when errors repeatedly affect a small subset of classes or choices. At the same time, robustness methods are dual-use: stronger adaptation techniques could also make undesirable or poorly governed models more stable under perturbation, and improved robustness metrics may create overconfidence if used without task-specific safety evaluation. For this reason, FragileFlow should be viewed as a diagnostic and training tool to be combined with domain-specific validation, safety testing, and monitoring rather than as a standalone guarantee of safe deployment.
\section{Additional Experimental Results}
\label{app:exp}

% TODO: narrative to be added. For now this file only wires in the figures and tables
% in the intended order of appearance.

\subsection{LLM main results with per-cell standard deviations}
\label{app:llm_main_grid_std}

    \begin{table}[h]
    \centering
    \scriptsize
    \setlength{\tabcolsep}{2pt}
    \caption{\textbf{LLM main results (mean $\pm$ seed std): Clean Worst-Class Acc (\%).} This is the WC-Acc on clean data excluding the perturbation. Reported as mean $\pm$ seed standard deviation over three paired seeds. calibrated buffer $\gamma_{25}$, default $(\alpha,\beta)$.}
    \label{tab:llm_main_std_wc}
    \resizebox{\textwidth}{!}{%
    \begin{tabular}{llccccccccc}
    \toprule
     & & & \multicolumn{2}{c}{CE+aug} & \multicolumn{2}{c}{R3F} & \multicolumn{2}{c}{SMART}  \\
    \cmidrule(lr){4-5}\cmidrule(lr){6-7}\cmidrule(lr){8-9}
    Model & Dataset & \emph{CE} & base & +plug & base & +plug & base & +plug  \\
    \midrule
    Qwen-0.5B & ARC-C & \emph{41.16 $\pm$ 3.28} & 44.18 $\pm$ 3.98 & 43.47 $\pm$ 3.93 & 40.89 $\pm$ 2.83 & 42.84 $\pm$ 5.22 & 34.13 $\pm$ 3.79 & 43.38 $\pm$ 4.38  \\
    Qwen-0.5B & CSQA & \emph{45.22 $\pm$ 3.64} & 45.33 $\pm$ 7.16 & 45.56 $\pm$ 7.94 & 45.56 $\pm$ 3.37 & 45.00 $\pm$ 3.36 & 48.00 $\pm$ 4.01 & 48.89 $\pm$ 3.69  \\
    Qwen-1.5B & ARC-C & \emph{66.84 $\pm$ 2.85} & 68.27 $\pm$ 5.43 & 70.67 $\pm$ 2.49 & 66.67 $\pm$ 3.01 & 66.67 $\pm$ 2.89 & 66.49 $\pm$ 3.18 & 66.76 $\pm$ 4.36  \\
    Qwen-1.5B & CSQA & \emph{61.67 $\pm$ 2.79} & 67.56 $\pm$ 2.10 & 67.33 $\pm$ 1.80 & 61.78 $\pm$ 2.72 & 61.67 $\pm$ 2.87 & 63.22 $\pm$ 3.19 & 63.33 $\pm$ 3.87  \\
    Mistral-7B & ARC-C & \emph{67.29 $\pm$ 4.87} & 65.69 $\pm$ 4.06 & 68.53 $\pm$ 6.25 & 67.47 $\pm$ 4.41 & 67.11 $\pm$ 4.03 & 42.13 $\pm$ 20.54 & 54.84 $\pm$ 14.82  \\
    Mistral-7B & CSQA & \emph{66.00 $\pm$ 3.62} & 67.11 $\pm$ 3.88 & 66.89 $\pm$ 3.57 & 66.67 $\pm$ 4.44 & 66.44 $\pm$ 4.07 & 50.56 $\pm$ 4.87 & 57.89 $\pm$ 3.42  \\
    \bottomrule
    \end{tabular}}
    \end{table}

    \begin{table}[h]
    \centering
    \scriptsize
    \setlength{\tabcolsep}{2pt}
    \caption{\textbf{LLM main results (mean $\pm$ seed std): Accuracy on perturbation-only inputs (\%).} Each cell averages accuracy over $\{$typo, distractor, format\_rewrite$\}$. calibrated buffer $\gamma_{25}$, default $(\alpha,\beta)$. Reported as mean $\pm$ seed standard deviation over three paired seeds.}
    \label{tab:llm_main_std_acc_pert}
    \resizebox{\textwidth}{!}{%
    \begin{tabular}{llccccccccc}
    \toprule
     & & & \multicolumn{2}{c}{CE+aug} & \multicolumn{2}{c}{R3F} & \multicolumn{2}{c}{SMART}  \\
    \cmidrule(lr){4-5}\cmidrule(lr){6-7}\cmidrule(lr){8-9}
    Model & Dataset & \emph{CE} & base & +plug & base & +plug & base & +plug  \\
    \midrule
    Qwen-0.5B & ARC-C & \emph{41.44 $\pm$ 1.80} & 49.16 $\pm$ 6.02 & 49.47 $\pm$ 5.50 & 41.53 $\pm$ 1.97 & 42.40 $\pm$ 2.36 & 42.16 $\pm$ 1.28 & 42.11 $\pm$ 0.99  \\
    Qwen-0.5B & CSQA & \emph{42.82 $\pm$ 1.77} & 53.60 $\pm$ 2.53 & 54.36 $\pm$ 2.68 & 42.67 $\pm$ 1.64 & 42.73 $\pm$ 1.50 & 44.67 $\pm$ 2.08 & 44.16 $\pm$ 2.31  \\
    Qwen-1.5B & ARC-C & \emph{62.80 $\pm$ 3.00} & 72.18 $\pm$ 1.16 & 73.22 $\pm$ 1.10 & 62.84 $\pm$ 3.10 & 63.02 $\pm$ 3.38 & 68.47 $\pm$ 2.33 & 68.20 $\pm$ 2.59  \\
    Qwen-1.5B & CSQA & \emph{59.98 $\pm$ 3.12} & 71.49 $\pm$ 1.37 & 71.53 $\pm$ 1.08 & 59.98 $\pm$ 3.18 & 59.71 $\pm$ 3.33 & 65.33 $\pm$ 1.80 & 65.58 $\pm$ 3.52  \\
    Mistral-7B & ARC-C & \emph{66.78 $\pm$ 2.12} & 72.42 $\pm$ 2.85 & 73.04 $\pm$ 4.04 & 66.36 $\pm$ 2.34 & 66.07 $\pm$ 2.45 & 55.51 $\pm$ 5.35 & 56.98 $\pm$ 4.78  \\
    Mistral-7B & CSQA & \emph{66.42 $\pm$ 3.44} & 74.11 $\pm$ 2.42 & 74.60 $\pm$ 2.19 & 66.47 $\pm$ 2.96 & 66.11 $\pm$ 3.10 & 51.96 $\pm$ 4.03 & 52.89 $\pm$ 4.34  \\
    \bottomrule
    \end{tabular}}
    \end{table}

\paragraph{Discussion.}
These auxiliary metrics are reported mainly for completeness and as utility checks, aligning the LLM appendix with the additional metrics reported for VLMs. 
Clean worst-class accuracy shows mixed but generally small changes across base learners, while perturbation-only average accuracy is mostly stable. 
Thus, these results do not form a separate claim of uniform improvement; they simply indicate that the vulnerable-flow reductions reported in the main paper are not obtained by a clear degradation of standard utility metrics.
% \newpage

% \subsection{Per-model buffer sweep (LLM)}
% \label{app:llm_gamma_sweep}

% \begin{figure}[h]
%     \centering
%     \includegraphics[width=\textwidth]{FIGURE_paper/fig_gamma_per_model_grid_v5_no_awp_clean_accuracy_pert_safety_with_tables.pdf}
%     \caption{\textbf{Per-model $\gamma$ sweep (LLM).} Full grid of paired base (dashed) vs.\ base $+$ plug-in (solid) curves
%     for every (model, dataset, base learner) triple in the beneficial regime. Companion to the aggregated main-paper figure.}
%     \label{fig:gamma_per_model_grid}
% \end{figure}

% \newpage

\subsection{VLM 4-shot low-shot stress test}
\label{app:vlm_vit_shot4}

% CLIP ViT-B/16 4-shot low-shot stress test (companion to the 16-shot main table).

\begin{table}[h]
\centering
\small
\setlength{\tabcolsep}{3pt}
\caption{\textbf{CLIP ViT-B/32 4-shot low-shot stress test.} Same evaluation protocol as the main ViT table, but with shots = 4. Robustness is evaluated on test images using the default PGD setting of our ViT pipeline (\texttt{attack\_type=PGD}, $\varepsilon=1.0/255$, 100 attack steps). PGD WC-Acc denotes worst-class accuracy on PGD-perturbed test images. \textbf{Bold = best among LoRA-adv family rows}, as in the main table.}
\label{tab:vlm_vit_shot4}
\resizebox{\textwidth}{!}{%
\begin{tabular}{llccccccc}
\toprule
Dataset & Method & Clean Acc $\uparrow$ & PGD Acc $\uparrow$ & Clean WC $\uparrow$ & PGD WC $\uparrow$ & $\widehat{\mathrm{VSR}}_{\gamma}$ (pgd) $\downarrow$ & $\mathrm{VWR}_\gamma$ (pgd) $\downarrow$ & $n$ \\
\midrule
\multirow{4}{*}{DTD}
 % & ZeroShot                 & $42.79$ & $14.60$ & $0.00$ & $0.00$ & $18.83 \pm 0.35$ & $21.39 \pm 0.33$ & 3 \\
 % & LoRA                     & $60.26 \pm 0.97$ & $20.15 \pm 2.28$ & $23.15 \pm 3.46$ & $0.00$ & $37.25 \pm 3.67$ & $27.10 \pm 0.18$ & 3 \\
 % & LoRA + plugin            & $60.15 \pm 1.02$ & $19.98 \pm 2.01$ & $23.15 \pm 3.46$ & $0.00$ & $36.60 \pm 4.03$ & $27.05 \pm 0.33$ & 3 \\
 & LoRA-adv                 & $59.54 \pm 0.37$ & $20.23 \pm 1.19$ & $\mathbf{15.74 \pm 5.71}$ & $\mathbf{0.00}$ & $32.19 \pm 2.83$ & $26.92 \pm 0.06$ & 3 \\
 & LoRA-adv + plugin (inner) & $\mathbf{59.57 \pm 0.49}$ & $20.37 \pm 1.24$ & $15.74 \pm 5.71$ & $0.00$ & $32.37 \pm 3.03$ & $27.01 \pm 0.12$ & 3 \\
 & LoRA-adv + plugin (outer) & $58.94 \pm 0.85$ & $\mathbf{20.41 \pm 1.21}$ & $12.96 \pm 7.29$ & $0.00$ & $31.21 \pm 1.95$ & $26.66 \pm 0.28$ & 3 \\
 & LoRA-adv + plugin (both) & $59.06 \pm 0.91$ & $20.29 \pm 1.23$ & $12.04 \pm 6.55$ & $0.00$ & $\mathbf{31.18 \pm 2.41}$ & $\mathbf{26.61 \pm 0.35}$ & 3 \\
\midrule
\multirow{4}{*}{OxfordPets}
 % & ZeroShot                 & $85.04$ & $21.42$ & $0.00$ & $0.00$ & $45.45 \pm 1.51$ & $48.85 \pm 1.49$ & 3 \\
 % & LoRA                     & $86.94$ & $19.24$ & $56.00$ & $1.00$ & $75.55$ & $66.11$ & 1 \\
 % & LoRA + plugin            & $87.41$ & $19.57$ & $56.00$ & $1.00$ & $69.53$ & $64.72$ & 1 \\
 & LoRA-adv                 & $86.64 \pm 0.44$ & $19.48 \pm 0.72$ & $54.33 \pm 9.29$ & $0.67 \pm 0.47$ & $57.09 \pm 1.99$ & $61.77 \pm 0.36$ & 3 \\
 & LoRA-adv + plugin (inner) & $86.73 \pm 0.53$ & $19.49 \pm 0.70$ & $54.33 \pm 10.50$ & $0.67 \pm 0.47$ & $57.32 \pm 2.41$ & $61.48 \pm 0.48$ & 3 \\
 & LoRA-adv + plugin (outer) & $\mathbf{87.27}$ & $\mathbf{20.55}$ & $\mathbf{56.00}$ & $\mathbf{1.01}$ & $\mathbf{53.48}$ & $60.84$ & 1 \\
 & LoRA-adv + plugin (both) & $86.98 \pm 0.42$ & $19.88 \pm 0.78$ & $53.00 \pm 11.43$ & $1.00 \pm 0.82$ & $55.25 \pm 1.80$ & $\mathbf{60.31 \pm 0.24}$ & 3 \\
\midrule
\multirow{4}{*}{Caltech101}
 % & ZeroShot                 & $91.40$ & $56.43$ & $6.67$ & $0.00$ & $19.54 \pm 0.34$ & $56.19 \pm 1.25$ & 3 \\
 % & LoRA                     & $93.96 \pm 0.29$ & $60.14 \pm 0.86$ & $31.43 \pm 12.12$ & $0.00$ & $23.80 \pm 3.04$ & $62.04 \pm 1.01$ & 3 \\
 % & LoRA + plugin            & $94.00 \pm 0.35$ & $60.30 \pm 0.96$ & $33.65 \pm 13.96$ & $0.00$ & $23.25 \pm 4.26$ & $60.57 \pm 0.74$ & 3 \\
 & LoRA-adv                 & $94.04 \pm 0.03$ & $\mathbf{60.97 \pm 1.06}$ & $25.16 \pm 14.13$ & $\mathbf{1.28 \pm 1.81}$ & $25.71 \pm 4.56$ & $55.83 \pm 3.56$ & 3 \\
 & LoRA-adv + plugin (inner) & $94.04 \pm 0.10$ & $60.82 \pm 1.06$ & $\mathbf{26.46 \pm 13.84}$ & $1.28 \pm 1.81$ & $25.63 \pm 4.62$ & $56.09 \pm 3.45$ & 3 \\
 & LoRA-adv + plugin (outer) & $\mathbf{94.08 \pm 0.09}$ & $60.85 \pm 0.98$ & $24.23 \pm 13.07$ & $0.00$ & $24.26 \pm 4.03$ & $54.62 \pm 3.15$ & 3 \\
 & LoRA-adv + plugin (both) & $93.98 \pm 0.11$ & $60.77 \pm 0.89$ & $24.23 \pm 13.07$ & $0.00$ & $\mathbf{24.04 \pm 4.58}$ & $\mathbf{54.45 \pm 3.03}$ & 3 \\
\bottomrule
\end{tabular}}
\end{table}

\paragraph{Discussion.}
This 4-shot setting is included as a low-shot stress test and as the per-dataset detail behind the VLM ablation in Table~\ref{tab:vlm_b0_ablation}. 
Because the adaptation set is very small, the results are naturally noisy and should not be read as a claim that every plug-in placement improves every metric. 
The main signal is that the outer and both variants often reduce the vulnerable-flow measures while keeping clean and PGD average accuracy close to the LoRA-adv baseline, whereas the inner-only variant is less stable. 
PGD worst-class accuracy is frequently floor-limited in this setting, so it provides limited resolution beyond showing the difficulty of the stress test. 
We therefore use this table as supporting evidence that FragileFlow remains useful in a harder low-shot protocol, rather than as a primary claim of uniform improvement.

% \newpage

\subsection{\texorpdfstring{$\beta = 0$}{beta = 0} ablation: per-cell numeric breakdown}
\label{app:beta_ablation_numbers}

% LLM beta=0 ablation: R_spec only vs composite
% Base rows: 3 seeds on both ARC and CSQA (s7, s42, s123).
% +Plug (beta=0) rows: 2 seeds on both datasets (s42, s123) -- the beta=0 batch did not include s7.
% +Plug (beta>0) rows: 3 seeds at gamma=0.25 after the s7 ARC rerun (s7, s42, s123).
%without AWP!!!

\begin{table}[h]
\centering
\small
\setlength{\tabcolsep}{4pt}
\caption{\textbf{LLM $\beta=0$ ablation ($\mathcal{R}_{\mathrm{spec}}$ only vs.\ composite).}
This table is intended as a mechanism ablation for the stability term.
The main comparison is between the spectral-only plug-in $(\beta=0)$ and the composite objective $(\beta>0)$.
\textbf{Base} provides a non-plug scale reference averaged over four learners:
\texttt{base\_clean} (CE), \texttt{base\_aug}, \texttt{r3f}, and \texttt{smart}.
The +Plug rows are computed from the corresponding plug-in runs.
Values in parentheses report changes relative to \textbf{Base}; the bottom block averages these shifts over the six model--dataset cells.
Lower is better for $\widehat{\mathrm{VSR}}_\gamma$ and $\widehat{\mathrm{VWR}}_\gamma$, while higher is better for accuracy metrics.
Bold marks the desired direction relative to \textbf{Base}.}
\label{tab:llm_b0_ablation}
\resizebox{0.9\textwidth}{!}{%
\begin{tabular}{llcrrrrr}
\toprule
Model & Dataset & Config
& $\widehat{\mathrm{VSR}}_\gamma$ $\downarrow$
& $\widehat{\mathrm{VWR}}_\gamma$ $\downarrow$
& WC-Acc $\uparrow$
& Ptb Acc $\uparrow$
& Clean Acc $\uparrow$ \\
\midrule
\multirow{3}{*}{Qwen-0.5B} & \multirow{3}{*}{ARC-C} & Base & $45.06$ & $50.27$ & $40.09$ & $50.01$ & $52.67$ \\
 & & +Plug ($\beta{=}0$) & $\mathbf{34.88} (\mathbf{-10.18})$ & $\mathbf{38.44} (\mathbf{-11.83})$ & $\mathbf{43.69} (\mathbf{+3.60})$ & $\mathbf{51.47} (\mathbf{+1.46})$ & $\mathbf{54.30} (\mathbf{+1.63})$ \\
 & & +Plug ($\beta{>}0$) & $\mathbf{34.67} (\mathbf{-10.39})$ & $\mathbf{38.25} (\mathbf{-12.02})$ & $\mathbf{43.23} (\mathbf{+3.14})$ & $\mathbf{50.90} (\mathbf{+0.89})$ & $\mathbf{53.64} (\mathbf{+0.98})$ \\
\cmidrule(lr){2-8}
\multirow{3}{*}{Qwen-0.5B} & \multirow{3}{*}{CSQA} & Base & $35.62$ & $39.81$ & $46.03$ & $53.10$ & $58.50$ \\
 & & +Plug ($\beta{=}0$) & $\mathbf{26.79} (\mathbf{-8.84})$ & $\mathbf{30.42} (\mathbf{-9.39})$ & $\mathbf{46.09} (\mathbf{+0.06})$ & $\mathbf{53.84} (\mathbf{+0.74})$ & $\mathbf{59.32} (\mathbf{+0.82})$ \\
 & & +Plug ($\beta{>}0$) & $\mathbf{27.27} (\mathbf{-8.35})$ & $\mathbf{30.93} (\mathbf{-8.88})$ & $\mathbf{46.48} (\mathbf{+0.45})$ & $\mathbf{53.80} (\mathbf{+0.70})$ & $\mathbf{59.42} (\mathbf{+0.92})$ \\
\cmidrule(lr){2-8}
\multirow{3}{*}{Qwen-1.5B} & \multirow{3}{*}{ARC-C} & Base & $27.37$ & $30.94$ & $67.07$ & $71.34$ & $74.92$ \\
 & & +Plug ($\beta{=}0$) & $\mathbf{25.36} (\mathbf{-2.01})$ & $\mathbf{28.86} (\mathbf{-2.08})$ & $66.40 (-0.67)$ & $\mathbf{71.73} (\mathbf{+0.39})$ & $\mathbf{75.00} (\mathbf{+0.08})$ \\
 & & +Plug ($\beta{>}0$) & $\mathbf{25.50} (\mathbf{-1.87})$ & $\mathbf{28.33} (\mathbf{-2.61})$ & $\mathbf{68.03} (\mathbf{+0.96})$ & $\mathbf{71.93} (\mathbf{+0.59})$ & $\mathbf{75.36} (\mathbf{+0.44})$ \\
\cmidrule(lr){2-8}
\multirow{3}{*}{Qwen-1.5B} & \multirow{3}{*}{CSQA} & Base & $23.93$ & $27.18$ & $63.56$ & $69.22$ & $74.23$ \\
 & & +Plug ($\beta{=}0$) & $\mathbf{20.52} (\mathbf{-3.41})$ & $\mathbf{23.13} (\mathbf{-4.04})$ & $\mathbf{65.22} (\mathbf{+1.67})$ & $\mathbf{70.05} (\mathbf{+0.83})$ & $\mathbf{74.96} (\mathbf{+0.72})$ \\
 & & +Plug ($\beta{>}0$) & $\mathbf{20.25} (\mathbf{-3.68})$ & $\mathbf{23.25} (\mathbf{-3.93})$ & $\mathbf{64.11} (\mathbf{+0.56})$ & $\mathbf{69.76} (\mathbf{+0.54})$ & $\mathbf{74.47} (\mathbf{+0.23})$ \\
\cmidrule(lr){2-8}
\multirow{3}{*}{Mistral-7B} & \multirow{3}{*}{ARC-C} & Base & $31.02$ & $37.02$ & $60.64$ & $70.40$ & $73.75$ \\
 & & +Plug ($\beta{=}0$) & $\mathbf{30.56} (\mathbf{-0.47})$ & $\mathbf{35.00} (\mathbf{-2.02})$ & $60.22 (-0.42)$ & $70.08 (-0.32)$ & $73.60 (-0.15)$ \\
 & & +Plug ($\beta{>}0$) & $\mathbf{29.62} (\mathbf{-1.40})$ & $\mathbf{34.73} (\mathbf{-2.29})$ & $\mathbf{63.50} (\mathbf{+2.85})$ & $\mathbf{71.41} (\mathbf{+1.01})$ & $\mathbf{74.58} (\mathbf{+0.83})$ \\
\cmidrule(lr){2-8}
\multirow{3}{*}{Mistral-7B} & \multirow{3}{*}{CSQA} & Base & $24.17$ & $29.09$ & $62.58$ & $70.82$ & $74.98$ \\
 & & +Plug ($\beta{=}0$) & $\mathbf{22.32} (\mathbf{-1.85})$ & $\mathbf{27.73} (\mathbf{-1.36})$ & $61.24 (-1.34)$ & $70.34 (-0.48)$ & $74.41 (-0.57)$ \\
 & & +Plug ($\beta{>}0$) & $\mathbf{21.60} (\mathbf{-2.57})$ & $\mathbf{25.91} (\mathbf{-3.18})$ & $\mathbf{63.74} (\mathbf{+1.16})$ & $\mathbf{71.19} (\mathbf{+0.37})$ & $\mathbf{75.00} (\mathbf{+0.02})$ \\
\midrule
\multicolumn{3}{l}{\emph{Averaged shift vs.\ Base (6 cells)}}
& $\Delta_{\mathrm{rel}}\,\widehat{\mathrm{VSR}}_\gamma$
& $\Delta_{\mathrm{rel}}\,\widehat{\mathrm{VWR}}_\gamma$
& $\Delta_{\mathrm{abs}}\,\mathrm{WC}$
& $\Delta_{\mathrm{abs}}\,\mathrm{PtbAcc}$
& $\Delta_{\mathrm{abs}}\,\mathrm{CleanAcc}$ \\
% \multicolumn{3}{l}{+Plug ($\beta=0$)}
% & $-12.83\%$ & $-13.38\%$ & $-0.00$ pp & $+0.32$ pp & $+0.30$ pp \\
% \multicolumn{3}{l}{+Plug ($\beta>0$)}
% & $-13.98\%$ & $-14.42\%$ & $+1.39$ pp & $+0.49$ pp & $+0.43$ pp \\
\multicolumn{3}{l}{+Plug ($\beta=0$)} & $-13.02\%$ & $-13.14\%$ & $+0.48$ pp & $+0.44$ pp & $+0.42$ pp \\
\multicolumn{3}{l}{+Plug ($\beta>0$)} & $-13.97\%$ & $-14.37\%$ & $+1.52$ pp & $+0.68$ pp & $+0.57$ pp \\
\bottomrule
\end{tabular}}
\end{table}

\begin{table}[h]
\centering
\small
\setlength{\tabcolsep}{4pt}
\caption{\textbf{VLM $\beta=0$ ablation ($\mathcal{R}_{\mathrm{spec}}$ only vs.\ composite).}
% This table reports a separate CLIP ViT-B/32 + LoRA mechanism ablation. Each plug-in cell averages over plug variants:
% % $\{$\texttt{lora\_plugin}, \texttt{inner}, \texttt{outer}, \texttt{both}$\}$. 
% $\{$\texttt{lora\_plugin}, \texttt{lora\_adv\_plugins}$\}$. 
% Base averages 
% $\{$\texttt{lora}, \texttt{lora\_adv}$\}$. 
% Ptb WC and Ptb Acc are measured under PGD evaluation, while Clean Acc is shown as a utility check.
% Desired direction: $\widehat{\mathrm{VSR}}_{\gamma}\downarrow$,
% $\widehat{\mathrm{VWR}}_{\gamma}\downarrow$, Ptb WC/Ptb Acc/Clean Acc $\uparrow$.
% \textbf{Bold} marks the desired direction relative to the matching Base row.
This table reports a CLIP ViT-B/32 + LoRA mechanism ablation for the stability term.
The main comparison is between the spectral-only plug-in $(\beta=0)$ and the composite objective $(\beta>0)$.
\textbf{Base} provides a non-plug scale reference averaged over \texttt{lora} and \texttt{lora\_adv}; the +Plug rows average the corresponding plug-in variants.
Values in parentheses report changes relative to \textbf{Base}, and the bottom block averages these shifts over the six dataset--shot cells.
Because PGD WC is often near the floor, this VLM ablation is mainly interpreted as evidence of vulnerable-risk compression and utility preservation.
Lower is better for $\widehat{\mathrm{VSR}}_\gamma$ and $\widehat{\mathrm{VWR}}_\gamma$, while higher is better for accuracy metrics.
Bold marks the desired direction relative to \textbf{Base}.}
\label{tab:vlm_b0_ablation}
\resizebox{0.9\textwidth}{!}{%
\begin{tabular}{llcrrrrr}
\toprule
Dataset & Shots & Config
& $\widehat{\mathrm{VSR}}_{\gamma}$ $\downarrow$
& $\widehat{\mathrm{VWR}}_{\gamma}$ $\downarrow$
& Ptb WC $\uparrow$
& Ptb Acc $\uparrow$
& Clean Acc $\uparrow$ \\
\midrule
\multicolumn{8}{l}{\emph{PGD evaluation} (Ptb WC/Ptb Acc = PGD worst-class/accuracy; Clean Acc shown in the last column)} \\
\midrule

DTD & 4 & Base
& $30.99$ & $27.05$ & $0.00$ & $20.92$ & $60.76$ \\
DTD & 4 & +Plug ($\beta{=}0$)
& $\mathbf{30.42}\ (\mathbf{-0.57})$
& $\mathbf{26.93}\ (\mathbf{-0.12})$
& $0.00\ (+0.00)$
& $\mathbf{21.04}\ (\mathbf{+0.12})$
& $60.25\ (-0.51)$ \\
DTD & 4 & +Plug ($\beta{>}0$)
& $\mathbf{28.76}\ (\mathbf{-2.23})$
& $\mathbf{25.78}\ (\mathbf{-1.28})$
& $0.00\ (+0.00)$
& $20.54\ (-0.38)$
& $60.71\ (-0.05)$ \\

DTD & 16 & Base
& $49.33$ & $27.82$ & $0.00$ & $21.37$ & $68.62$ \\
DTD & 16 & +Plug ($\beta{=}0$)
& $\mathbf{47.10}\ (\mathbf{-2.22})$
& $\mathbf{27.74}\ (\mathbf{-0.08})$
& $0.00\ (+0.00)$
& $21.25\ (-0.12)$
& $68.22\ (-0.40)$ \\
DTD & 16 & +Plug ($\beta{>}0$)
& $\mathbf{46.51}\ (\mathbf{-2.82})$
& $\mathbf{27.82}\ (\mathbf{-0.01})$
& $0.00\ (+0.00)$
& $21.11\ (-0.25)$
& $68.31\ (-0.31)$ \\

OxfordPets & 4 & Base
& $65.86$ & $64.19$ & $1.00$ & $19.34$ & $86.56$ \\
OxfordPets & 4 & +Plug ($\beta{=}0$)
& $\mathbf{58.40}\ (\mathbf{-7.45})$
& $\mathbf{62.33}\ (\mathbf{-1.86})$
& $\mathbf{1.01}\ (\mathbf{+0.01})$
& $\mathbf{19.78}\ (\mathbf{+0.44})$
& $\mathbf{86.87}\ (\mathbf{+0.31})$ \\
OxfordPets & 4 & +Plug ($\beta{>}0$)
& $\mathbf{58.11}\ (\mathbf{-7.75})$
& $\mathbf{62.07}\ (\mathbf{-2.12})$
& $\mathbf{1.25}\ (\mathbf{+0.25})$
& $\mathbf{19.94}\ (\mathbf{+0.60})$
& $\mathbf{86.90}\ (\mathbf{+0.34})$ \\

OxfordPets & 16 & Base
& $65.61$ & $67.10$ & $0.50$ & $18.70$ & $88.70$ \\
OxfordPets & 16 & +Plug ($\beta{=}0$)
& $\mathbf{62.56}\ (\mathbf{-3.05})$
& $\mathbf{66.18}\ (\mathbf{-0.93})$
& $0.25\ (-0.25)$
& $\mathbf{19.01}\ (\mathbf{+0.31})$
& $\mathbf{89.03}\ (\mathbf{+0.33})$ \\
OxfordPets & 16 & +Plug ($\beta{>}0$)
& $\mathbf{62.32}\ (\mathbf{-3.29})$
& $\mathbf{65.94}\ (\mathbf{-1.16})$
& $0.25\ (-0.25)$
& $\mathbf{19.20}\ (\mathbf{+0.50})$
& $\mathbf{88.98}\ (\mathbf{+0.28})$ \\

Caltech101 & 4 & Base
& $25.83$ & $55.99$ & $0.00$ & $61.89$ & $94.14$ \\
Caltech101 & 4 & +Plug ($\beta{=}0$)
& $\mathbf{24.72}\ (\mathbf{-1.10})$
& $\mathbf{53.33}\ (\mathbf{-2.66})$
& $0.00\ (+0.00)$
& $\mathbf{62.11}\ (\mathbf{+0.22})$
& $94.13\ (-0.01)$ \\
Caltech101 & 4 & +Plug ($\beta{>}0$)
& $\mathbf{24.79}\ (\mathbf{-1.04})$
& $\mathbf{53.12}\ (\mathbf{-2.87})$
& $0.00\ (+0.00)$
& $\mathbf{62.05}\ (\mathbf{+0.16})$
& $94.14\ (+0.00)$ \\

Caltech101 & 16 & Base
& $19.94$ & $48.44$ & $5.00$ & $65.13$ & $95.09$ \\
Caltech101 & 16 & +Plug ($\beta{=}0$)
& $\mathbf{19.02}\ (\mathbf{-0.93})$
& $\mathbf{43.44}\ (\mathbf{-5.00})$
& $\mathbf{6.67}\ (\mathbf{+1.67})$
& $\mathbf{66.15}\ (\mathbf{+1.01})$
& $94.90\ (-0.19)$ \\
Caltech101 & 16 & +Plug ($\beta{>}0$)
& $\mathbf{18.92}\ (\mathbf{-1.02})$
& $\mathbf{43.02}\ (\mathbf{-5.42})$
& $\mathbf{6.67}\ (\mathbf{+1.67})$
& $\mathbf{66.20}\ (\mathbf{+1.06})$
& $94.95\ (-0.14)$ \\

\midrule
\multicolumn{3}{l}{\emph{Averaged shift vs.\ Base (6 dataset $\times$ shot cells)}}
& $\Delta_{\mathrm{rel}}\,\widehat{\mathrm{VSR}}_{\gamma}$
& $\Delta_{\mathrm{rel}}\,\widehat{\mathrm{VWR}}_{\gamma}$
& $\Delta_{\mathrm{abs}}\,\mathrm{PtbWC}$
& $\Delta_{\mathrm{abs}}\,\mathrm{PtbAcc}$
& $\Delta_{\mathrm{abs}}\,\mathrm{CleanAcc}$ \\
\multicolumn{3}{l}{+Plug ($\beta=0$)}
& $-5.21\%$ & $-3.35\%$ & $+0.24$ pp & $+0.33$ pp & $-0.08$ pp \\
\multicolumn{3}{l}{+Plug ($\beta>0$)}
& $-6.47\%$ & $-4.35\%$ & $+0.28$ pp & $+0.28$ pp & $+0.02$ pp \\
\bottomrule
\end{tabular}}
\end{table}

\paragraph{Discussion.}
Table~\ref{tab:llm_b0_ablation} and Table~\ref{tab:vlm_b0_ablation} shows the functional ablation of the two terms we designed. The ablation isolates the role of the stability term in the composite plug-in objective.
The spectral-only variant already reduces the vulnerable-flow measures in both LLM and VLM settings, confirming that the main effect comes from controlling the class-structured error-flow matrix.
Adding the stability term further improves the average LLM shifts and makes the accuracy-side gains more reliable, especially in cases where the spectral-only variant reduces risk but does not consistently improve worst-class or perturbation accuracy.
For VLMs, the same pattern appears mainly through stronger vulnerable-risk compression and utility preservation, while PGD worst-class accuracy remains floor-limited and is therefore less informative.
Thus, these tables support the intended mechanism: spectral control is the core driver of vulnerable-flow reduction, and the stability term helps this control translate more stably into downstream robustness and utility.

\newpage

\subsection{\texorpdfstring{Plug-in strength sensitivity ($\alpha,\beta$ sweep)}{Plug-in strength sensitivity (alpha, beta sweep)}}
\label{app:alphabeta_sweep}

\begin{figure}[h]
    \centering
    \includegraphics[width=0.9\textwidth]{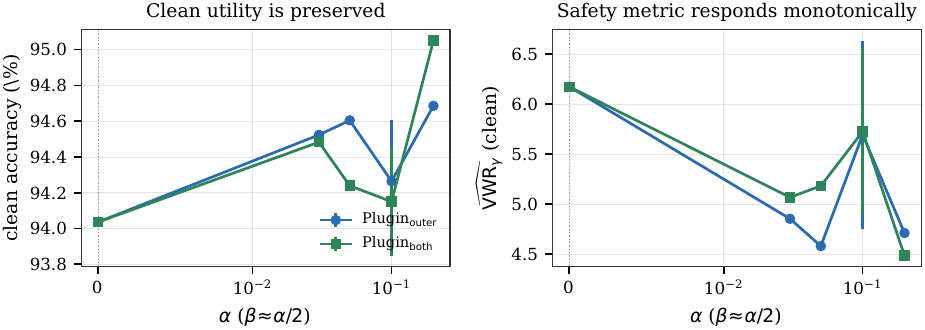}
    % \caption{\textbf{Sensitivity of the plug-in to its strength on CLIP ViT-B/32, Caltech101.}
    % We sweep along the diagonal $\beta \approx \alpha / 2$ for both $\mathrm{Plugin}_{\mathrm{outer}}$
    % and $\mathrm{Plugin}_{\mathrm{both}}$. \emph{Left:} clean accuracy is essentially flat across
    % the whole range $\alpha \in \{0, 0.03, 0.1, 0.3, 1\}$ (maximum deviation $\approx 1$~pp),
    % confirming that the plug-in does not pay a utility tax for its safety effect. \emph{Right:}
    % $\widehat{\mathrm{VWR}}_\gamma$ generally decreases for nonzero plug-in strengths, suggesting that the default $\alpha=0.1$ is not a sharply tuned sweet spot but sits on a flat ``safe-to-increase'' portion of the curve.}
\caption{\textbf{Sensitivity of the plug-in to its strength on CLIP ViT-B/32, Caltech101.}
We sweep along the diagonal $\beta \approx \alpha / 2$ for both $\mathrm{Plugin}_{\mathrm{outer}}$
and $\mathrm{Plugin}_{\mathrm{both}}$. \emph{Left:} clean accuracy remains nearly flat across
$\alpha \in \{0,0.03,0.1,0.3,1\}$, suggesting that the plug-in does not introduce a clear utility
cost over this range. \emph{Right:} the vulnerable-flow measure is generally lower for nonzero
plug-in strengths than for $\alpha=0$, although the response is not strictly monotonic. This suggests
that the default setting is not an isolated tuned point, while also showing that very aggressive
strength choices need not improve the safety metric further.}    \label{fig:alphabeta_sweep}
\end{figure}

\paragraph{Discussion.}
This sweep is intended as a sensitivity check rather than a full hyperparameter search. 
The main observation is that clean accuracy stays stable across the tested range, while the vulnerable-flow measure is usually reduced once the plug-in is activated. 
The response is not strictly monotonic, so we do not claim that larger regularization strength is always better. 
Instead, the sweep supports the weaker and more relevant point that the default plug-in strength is not a fragile single-point choice and that the robustness gains do not come from an obvious clean-accuracy trade-off.

\newpage

\subsection{Additional VLM sanity checks}
\label{app:vlm_sanity_checks}

% Cross-model sanity on Qwen2.5-VL-3B (4-shot, 3 seeds, PGD-10 at eps=1/255).

\begin{table}[h]
\centering
\small
\setlength{\tabcolsep}{3pt}
\caption{\textbf{Cross-model sanity on Qwen2.5-VL-3B-Instruct (4-shot, 3 seeds, PGD-10 at $\varepsilon = 1/255$).} \textbf{Bold = best among LoRA-adv family rows } (same convention as the main VLM table). Clean WC is uninformative on Qwen2.5-VL because the per-class evaluation uses only $\sim 10$ samples per class; PGD WC-Acc is often identically $0$ for the same reason. We therefore regard $\widehat{\mathrm{VSR}}_{\gamma}$ (pgd) and $\mathrm{VWR}_\gamma$ (pgd) as the primary safety signals here.}
\label{tab:qwenvl_crossmodel}
\resizebox{\textwidth}{!}{%
\begin{tabular}{llccccccc}
\toprule
Dataset & Method & Clean Acc $\uparrow$ & PGD Acc $\uparrow$ & Clean WC $\uparrow$ & PGD WC $\uparrow$ & $\widehat{\mathrm{VSR}}_{\gamma}$ (pgd) $\downarrow$ & $\mathrm{VWR}_\gamma$ (pgd) $\downarrow$ & $n$ \\
\midrule
% \multirow{5}{*}{Caltech101}
%  & ZeroShot                 & $99.60$ & $70.80$ & $0.00$ & $0.00$ & $46.17$ & $94.50$ & 1 \\
%  & LoRA                     & $97.73 \pm 0.34$ & $60.40 \pm 1.70$ & $0.00$ & $0.00$ & $31.30 \pm 1.61$ & $107.27 \pm 0.24$ & 3 \\
%  & LoRA-adv                 & $\mathbf{99.40 \pm 0.33}$ & $69.07 \pm 7.17$ & $\mathbf{0.00}$ & $\mathbf{0.00}$ & $\mathbf{21.09 \pm 6.06}$ & $78.95 \pm 18.16$ & 3 \\
%  & LoRA-adv + plugin (outer) & $99.07 \pm 0.57$ & $67.33 \pm 7.72$ & $0.00$ & $0.00$ & $22.80 \pm 4.28$ & $87.49 \pm 13.53$ & 3 \\
%  & LoRA-adv + plugin (both) & $98.80 \pm 1.14$ & $\mathbf{72.33 \pm 7.47}$ & $0.00$ & $0.00$ & $27.11 \pm 12.10$ & $\mathbf{77.69 \pm 22.75}$ & 3 \\
% \midrule
\multirow{5}{*}{DTD}
 & ZeroShot                 & $60.00$ & $3.20$ & $0.00$ & $0.00$ & $19.89$ & $22.25$ & 1 \\
 & LoRA                     & $68.33 \pm 3.90$ & $14.20 \pm 0.82$ & $0.00$ & $0.00$ & $20.63 \pm 3.22$ & $27.75 \pm 0.44$ & 3 \\
 & LoRA-adv                 & $\mathbf{61.47 \pm 3.35}$ & $13.60 \pm 2.12$ & $\mathbf{0.00}$ & $\mathbf{0.00}$ & $10.18 \pm 2.42$ & $18.65 \pm 3.04$ & 3 \\
 & LoRA-adv + plugin (outer) & $61.33 \pm 3.73$ & $\mathbf{14.20 \pm 1.07}$ & $0.00$ & $0.00$ & $\mathbf{8.07 \pm 1.44}$ & $\mathbf{16.15 \pm 2.25}$ & 3 \\
 & LoRA-adv + plugin (both) & $59.20 \pm 2.20$ & $13.40 \pm 1.31$ & $0.00$ & $0.00$ & $9.01 \pm 1.68$ & $17.31 \pm 2.27$ & 3 \\
\bottomrule
\end{tabular}}
\end{table}

\begin{figure}[h]
    \centering
    \includegraphics[width=0.7\textwidth]{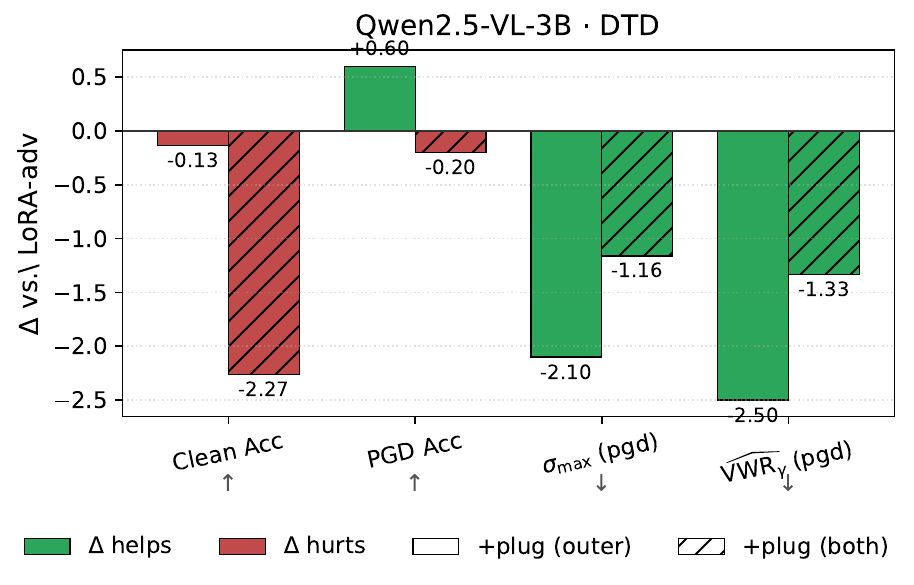}
    \caption{\textbf{QwenVL cross-model summary on DTD.}
    Delta bars of each LoRA-adv $+$ plug-in variant relative to LoRA-adv on the four
    reported metrics. Both variants improve the class-structural safety metrics
    ($\widehat{\mathrm{VSR}}_{\gamma}$, $\widehat{\mathrm{VWR}}_\gamma$) with minimal utility cost, showing that
    the plug-in is not tied to CLIP ViT. 
    % We restrict this visualization to DTD ($K=47$) because on Caltech101 ($K=100$) with 4-shot training, $\widehat{\mathrm{VSR}}_{\gamma}$ is dominated by a
    % few pathologically confusable class pairs (e.g.\ faces / faces\_easy), which decouples
    % it from PGD-Acc and VWR; the full per-cell numbers including Caltech101 remain in
    % Table~\ref{tab:qwenvl_crossmodel}.
    }
    \label{fig:vlm_qwen_delta}
\end{figure}

\begin{table}[h]
\centering
\small
\setlength{\tabcolsep}{3pt}
\caption{\textbf{Weak-attack supplementary results on CLIP ViT-B/32 (4-shot) - smaller radius.} Test robustness is evaluated with PGD at $\varepsilon=0.5/255$ for 100 steps. These results come from the dedicated weak-attack branch and are reported as point estimates from a single seed ($n=1$ for every row). \textbf{Bold = best among LoRA-adv family rows }.}
\label{tab:vlm_vit_weak_eps05}
\resizebox{0.9\textwidth}{!}{%
\begin{tabular}{llccccccc}
\toprule
Dataset & Method & Clean Acc $\uparrow$ & PGD Acc $\uparrow$ & Clean WC $\uparrow$ & PGD WC $\uparrow$ & $\widehat{\mathrm{VSR}}_{\gamma}$ (pgd) $\downarrow$ & $\mathrm{VWR}_\gamma$ (pgd) $\downarrow$ & $n$ \\
\midrule
\multirow{5}{*}{DTD}
 & ZeroShot                       & $42.79$ & $19.92$ & $0.00$ & $0.00$ & $18.38$ & $21.76$ & 1 \\
 % & LoRA                           & $61.47$ & $28.07$ & $27.78$ & $0.00$ & $29.69$ & $26.99$ & 1 \\
 % & LoRA + plugin                  & $61.23$ & $28.25$ & $27.78$ & $0.00$ & $28.69$ & $26.78$ & 1 \\
 & LoRA-adv                       & $59.87$ & $\mathbf{28.43}$ & $\mathbf{16.67}$ & $\mathbf{0.00}$ & $27.73$ & $26.17$ & 1 \\
 & LoRA-adv + plugin (inner)      & $\mathbf{60.22}$ & $\mathbf{28.43}$ & $\mathbf{16.67}$ & $\mathbf{0.00}$ & $27.96$ & $26.02$ & 1 \\
 & LoRA-adv + plugin (outer)      & $59.93$ & $28.07$ & $\mathbf{16.67}$ & $\mathbf{0.00}$ & $\mathbf{27.33}$ & $25.72$ & 1 \\
 & LoRA-adv + plugin (both)       & $59.87$ & $28.01$ & $\mathbf{16.67}$ & $\mathbf{0.00}$ & $27.63$ & $\mathbf{25.50}$ & 1 \\
\midrule
\multirow{5}{*}{OxfordPets}
 & ZeroShot                       & $85.04$ & $36.74$ & $0.00$ & $0.00$ & $46.69$ & $51.86$ & 1 \\
 & LoRA-adv                       & $86.18$ & $34.94$ & $51.00$ & $4.00$ & $49.36$ & $64.42$ & 1 \\
 & LoRA-adv + plugin (inner)      & $86.21$ & $34.86$ & $51.00$ & $4.00$ & $49.18$ & $63.93$ & 1 \\
 & LoRA-adv + plugin (outer)      & $\mathbf{87.33}$ & $35.68$ & $\mathbf{56.00}$ & $\mathbf{5.00}$ & $46.13$ & $62.30$ & 1 \\
 & LoRA-adv + plugin (both)       & $87.22$ & $\mathbf{35.90}$ & $\mathbf{56.00}$ & $\mathbf{5.00}$ & $\mathbf{45.88}$ & $\mathbf{61.74}$ & 1 \\
\midrule
\multirow{5}{*}{Caltech101}
 & ZeroShot                       & $91.40$ & $66.65$ & $6.67$ & $0.00$ & $15.78$ & $38.60$ & 1 \\
 & LoRA-adv                       & $94.04$ & $72.66$ & $26.67$ & $\mathbf{0.00}$ & $15.32$ & $29.87$ & 1 \\
 & LoRA-adv + plugin (inner)      & $94.00$ & $\mathbf{72.74}$ & $\mathbf{33.33}$ & $\mathbf{0.00}$ & $15.51$ & $30.24$ & 1 \\
 & LoRA-adv + plugin (outer)      & $\mathbf{94.16}$ & $72.70$ & $\mathbf{33.33}$ & $\mathbf{0.00}$ & $15.29$ & $29.84$ & 1 \\
 & LoRA-adv + plugin (both)       & $94.08$ & $72.62$ & $26.67$ & $\mathbf{0.00}$ & $\mathbf{15.22}$ & $\mathbf{29.74}$ & 1 \\
\bottomrule
\end{tabular}}
\end{table}

\newpage
\begin{table}[h]
\centering
\small
\setlength{\tabcolsep}{3pt}
\caption{\textbf{Weak-attack supplementary results on CLIP ViT-B/32 (4-shot) - fewer steps.} Test robustness is evaluated with PGD at $\varepsilon=1.0/255$ for 20 steps. These results come from the dedicated weak-attack branch and are reported as point estimates from a single seed ($n=1$ for every row). Caltech101 was not included in this branch. \textbf{Bold = best among LoRA-adv family rows }.}
\label{tab:vlm_vit_weak_eps10}
\resizebox{0.9\textwidth}{!}{%
\begin{tabular}{llccccccc}
\toprule
Dataset & Method & Clean Acc $\uparrow$ & PGD Acc $\uparrow$ & Clean WC $\uparrow$ & PGD WC $\uparrow$ & $\widehat{\mathrm{VSR}}_{\gamma}$ (pgd) $\downarrow$ & $\mathrm{VWR}_\gamma$ (pgd) $\downarrow$ & $n$ \\
\midrule
\multirow{5}{*}{DTD}
 & ZeroShot                       & $42.79$ & $14.72$ & $0.00$ & $0.00$ & $19.27$ & $21.78$ & 1 \\
 % & LoRA                           & $61.47$ & $20.80$ & $27.78$ & $0.00$ & $32.24$ & $27.32$ & 1 \\
 % & LoRA + plugin                  & $61.17$ & $20.45$ & $27.78$ & $0.00$ & $31.22$ & $27.31$ & 1 \\
 & LoRA-adv                       & $60.05$ & $21.04$ & $\mathbf{16.67}$ & $\mathbf{0.00}$ & $29.78$ & $26.83$ & 1 \\
 & LoRA-adv + plugin (inner)      & $\mathbf{60.22}$ & $\mathbf{21.22}$ & $\mathbf{16.67}$ & $\mathbf{0.00}$ & $29.91$ & $26.76$ & 1 \\
 & LoRA-adv + plugin (outer)      & $60.05$ & $21.10$ & $\mathbf{16.67}$ & $\mathbf{0.00}$ & $29.63$ & $26.30$ & 1 \\
 & LoRA-adv + plugin (both)       & $60.28$ & $20.98$ & $\mathbf{16.67}$ & $\mathbf{0.00}$ & $\mathbf{29.05}$ & $\mathbf{26.17}$ & 1 \\
\midrule
\multirow{5}{*}{OxfordPets}
 & ZeroShot                       & $85.04$ & $21.45$ & $0.00$ & $0.00$ & $47.68$ & $50.80$ & 1 \\
 & LoRA-adv                       & $86.18$ & $19.41$ & $51.00$ & $1.00$ & $56.04$ & $62.25$ & 1 \\
 & LoRA-adv + plugin (inner)      & $86.21$ & $19.54$ & $49.00$ & $\mathbf{2.00}$ & $55.82$ & $62.04$ & 1 \\
 & LoRA-adv + plugin (outer)      & $\mathbf{87.27}$ & $\mathbf{20.55}$ & $\mathbf{56.00}$ & $1.01$ & $\mathbf{53.37}$ & $60.80$ & 1 \\
 & LoRA-adv + plugin (both)       & $86.70$ & $20.33$ & $47.00$ & $\mathbf{2.00}$ & $53.47$ & $\mathbf{60.71}$ & 1 \\
\bottomrule
\end{tabular}}
\end{table}

\paragraph{Discussion.}
These tables are intended as supplementary sanity checks rather than primary evidence.
Table~\ref{tab:qwenvl_crossmodel} shows that the plug-in can also reduce vulnerable-flow measures on Qwen2.5-VL, suggesting that the effect is not tied only to CLIP ViT-B/32.
The weak-attack CLIP results in Tables~\ref{tab:vlm_vit_weak_eps05} and~\ref{tab:vlm_vit_weak_eps10} are single-seed point estimates, so we interpret them conservatively.
Across these weaker PGD protocols, the outer and both variants usually reduce the vulnerable-flow measures with little change in clean accuracy, while inner-only placement is less stable.
PGD worst-class accuracy remains floor-limited in several cells, so these tables are used mainly to check that the vulnerable-flow compression persists under alternative VLM evaluation protocols.

\section{Compute resources}
\label{app:compute_resources}
All experiments are conducted in Python\textsuperscript{\textregistered} on a machine equipped with an AMD EPYC\textsuperscript{\textregistered} 7452 32-Core Processor, 128GB of RAM, and one A100 GPU with 40GB of VRAM.

% \clearpage
% \input{checklist}

%%%%%%%%%%%%%%%%%%%%%%%%%%%%%%%%%%%%%%%%%%%%%%%%%%%%%%%%%%%%

% \newpage
% \input{checklist.tex}

\end{document}